%% file: iclr2027_arxiv.tex
\documentclass{article} 
\usepackage{iclr2027_conference,times}

\newif\ifpreprint
\preprinttrue  

\ifpreprint
    \iclrfinalcopy
\fi

\input{math_commands.tex}

\usepackage{hyperref}
\usepackage{url}
\usepackage{enumitem}
\usepackage{listings}
\usepackage{amssymb}
\usepackage{booktabs} 
\usepackage{multirow}
\usepackage{tikz}
\usepackage{graphicx}
\usepackage{amsmath}
\usepackage{amsthm}
\usepackage{algorithm}
\usepackage{algpseudocode}
\usepackage{atnguyen_iclr}

\definecolor{ablatedblue}{HTML}{287FA8}
\definecolor{intactgray}{HTML}{858585}

\title{Can Capability Enhancement be Localized?}

\author{
    Anh Tuan Nguyen, Anna Bair, Zico Kolter \\
    { Carnegie Mellon University} \\ 
    { \texttt{\{atnguyen,abair,zkolter\}@cs.cmu.edu}}
}

\begin{document}

\maketitle

\ifpreprint
    \fancyhead[L]{Preprint}
\fi

\begin{abstract}

    Transformer-based language models have shown strong capabilities across many tasks and domains. However, prior work shows that, for a given task, not all attention heads matter equally. Specifically, they suggest that many \textit{redundant} heads can be safely removed with little-to-no effect, while a small set of \textit{important} heads, when ablated, severely damages the model's performance on the given task. In this work, we investigate a complementary aspect, called \textit{Capability Enhancement Localization} (CEL): \textit{given a model and a task, does there exist a small set of heads whose removal improves the task performance?} We first investigate this question \textit{empirically} across models from different families. Using a simple two-stage routine that screens candidate heads and evaluates their joint ablations, we identify sparse head combinations whose removal improves task-specific performance without further fine-tuning. Motivated by these observations, we ask \textit{theoretically whether such a phenomenon can naturally arise from training dynamics, and under what conditions}. To this end, we construct a simplified multi-head softmax attention model and two tasks with conflicting features, and show that gradient descent from small random initialization provably concentrates the conflicting features into particular heads. We further show that ablating such heads improves performance on one task while harming the other, even though training converges to a global optimum of the joint objective. This establishes a possible training-induced theoretical explanation for CEL.
\end{abstract}
\section{Introduction}

Large language models (LLMs) \citep{vaswani2017attention, hoffmann2022training} have demonstrated strong capabilities across diverse tasks, including language understanding, text generation, question answering, code generation, and mathematical reasoning \citep{brown2020language, raffel2020exploring, rae2021scaling}, among others. A notable feature of LLMs is that a single set of parameters supports many different tasks, which can often be specified through natural-language instructions or a few demonstrations without task-specific parameter updates \citep{brown2020language, wei2021finetuned}. This breadth of functionality motivates a line of work studying how a shared network supports diverse capabilities  and how its internal components (e.g., attention heads/layers) contribute to LLM performance on individual tasks. 

Early works on LLM pruning demonstrated that the attention heads exhibit substantial redundancy and contribute unequally across different tasks. For example, \citet{clark2019does} observed that attention heads within the same BERT layers \citep{devlin2019bert} often exhibit similar attention patterns, leading to overlapping behavior. Moreover, \citet{michel2019sixteen} and \citet{voita2019analyzing} showed that many attention heads can be pruned after training without substantially affecting the overall LLM performance. Together, those findings emphasize that the full complement of attention heads is generally not required at inference time for preserving task capabilities, motivating a closer look at which heads are dispensable and which are critical for particular tasks. 

Besides the evidence of redundancy, recent works have identified attention heads with specialized roles. Examples include induction mechanism corresponding to in-context learning \citep{olsson2022context}, information retrieval from long contexts \citep{wu2025retrieval}, and processing operands and operators for arithmetic tasks \citep{zhang2024interpreting}, to name a few. Other work identified distinct heads that favor internal knowledge encoded in the model's weights or external context when these information sources conflict \citep{jin2024cutting}. These analyses revealed how specific heads support or route computations under various scenarios. More recently, broader capability-localization studies complement these findings. In particular, \citet{huang2025capability} identified a small group of \textit{neurons} associated with shared capabilities, while \citet{bair2026compressed} leveraged a cheap compressed sensing method (i.e., LASSO \citep{tibshirani1996regression}) to identify \textit{task-critical attention heads} corresponding to separate capabilities. Ablating just a few of these task-critical heads can \textit{dramatically degrade} the LLM performance on the corresponding task.

The above findings provided evidence of \textit{redundant} and \textit{task-critical} attention heads. In this work, we complement these findings by asking the following counter-intuitive question: \textit{given an arbitrary task and an LLM, is there a sparse set of attention heads such that ablating them improves the performance of that LLM on that task, without any further fine-tuning?} For convenience, we term this question \textit{Capability Enhancement Localization} (CEL). 

We note that beneficial head removal has already been observed in some \textit{specific} scenarios. \citet{held2023shapley} identify heads that interfere with performance in multilingual models and show that pruning them can improve performance for a target language. \citet{jin2024cutting} prune heads involved in conflicts between a model's internal knowledge and external context. More recently, \citet{nguyen2025structured} observe that pruning certain heads can improve mathematical reasoning and use different pruned configurations to generate diverse answers for best-of-$N$ selection. In this work, we ask whether sparse beneficial head ablations can be found across a \textit{broader range of tasks and models}, without presupposing a particular source of interference. More importantly, we ask \textit{how successful training can produce a head whose removal improves a task}. We answer this theoretical question in a simplified attention model, where task conflict and learned feature allocation give rise to beneficial head ablation. 

\textbf{Contribution.}  In this work, we study the \textit{Capability Enhancement Localization} (CEL) question, which asks whether there exists a sparse set of attention heads whose ablation improves task performance without further fine-tuning, given a task. Our contributions are:
\begin{enumerate}[leftmargin=*]
    \item In Section \ref{sec:empirical-result}, we consider multiple model families and task configurations. Using a simple two-stage approach, we found evidence of the CEL phenomenon: given a task, ablating a sparse set of heads can significantly improve models' task-specific performance.
    \item Motivated by empirical evidence of CEL, we seek a potential theoretical explanation for whether and how this phenomenon naturally arises in LLM training. To this end, in Section~\ref{sec:theoretical-explanation}, we construct a simplified softmax attention model, trained on two tasks with a feature conflict. We then show in Theorem~\ref{thm:training-induce-localization} that, under standard GD training, such conflicting features can be encoded into a specific attention head, leading to the \textit{localization} phenomenon.
    \item We then further show in Theorem~\ref{thm:training-induce-ablation} that ablating such a head can improve the model's performance on a single task while harming the other, even though training converges to an optimum of the joint objective. Together with the localization phenomenon, this theoretically establishes one possible mechanism for CEL.  
\end{enumerate}

\section{Related Work}

\textbf{Attention-head functionality.}
Studies of Transformer components reveal both redundancy and functional specialization. Many attention heads can be removed with little loss in task performance \citep{michel2019sixteen, voita2019analyzing}, while others participate in identifiable computations, including induction and information retrieval \citep{olsson2022context, wu2025retrieval}. Beyond individual head functions, localization studies investigate components associated with factual associations, task-specific skills, and capabilities shared across examples \citep{meng2022locating, panigrahi2023task, huang2025capability}. At the attention-head level, \citet{bair2026compressed} identify sparse sets of task-critical heads whose joint ablation substantially degrades target-task performance while largely preserving performance on unrelated tasks. Together, these findings motivate examining how sparse interventions affect task performance, including whether their effects can be beneficial. In this work, \textit{we instead focus on the enhancement regime}: identifying small sets of attention heads whose joint removal improves task performance without further fine-tuning. Furthermore, we also seek a \textit{theoretical explanation} for how this phenomenon can arise in standard LLM training.

\textbf{Performance enhancement through head pruning.}
Prior work has found benefits from head removal in several settings. \citet{held2023shapley} show that pruning heads associated with interference across languages can improve performance for a target language. \citet{jin2024cutting} study conflicts between internal knowledge and external context, and show that pruning selected heads can improve question answering. In mathematical reasoning, \citet{nguyen2025structured} observe improvements from pruning particular heads and use different pruned configurations to produce diverse answers for best-of-$N$ selection. In this work,  we consider a more general setting by examining sparse beneficial interventions across a broader range of tasks and models, and provide a potential theoretical explanation of how this phenomenon can arise in LLM training dynamics, an aspect that was ignored by prior works.

\textbf{Mechanistic interpretability and circuit discovery.}
Mechanistic interpretability studies how internal components implement model behavior. \citet{wang2022interpretability} identify an attention-head circuit for indirect object identification, while \citet{conmy2023towards} use activation patching to automate the discovery of computational subgraphs. Such localization need not predict the effect of changing a model: \citet{hase2023does} find that causal-tracing results do not identify the best MLP layers for factual editing. In our experiments, head ablation measures the effect of a specified intervention but does not reveal the computation performed by the removed heads. Our theory instead shows, in a simplified model, how training can produce a head whose removal benefits one task.

\section{Empirical Motivation: Sparse Beneficial Head Ablation} \label{sec:empirical-result}

In this section, we introduce our notation, define task-specific performance gain under head ablations, and formulate the \textit{Capability Enhancement Localization } (CEL) problem as discovering small head subsets whose removal improves task-specific performance of the considered model.

\subsection{Head ablation and evaluation}

\textbf{Notation.} We consider a transformer-based LLM $M$ with $L$ layers and $H$ attention heads per layer, with $N = LH$ heads in total. We denote the head at layer $\ell \in \{1, \dots, L\}$ and position $i \in \{1, \dots, H\}$ by $h_{\ell, i}$, and the set of all attention heads by $\mathcal{H} = \left\{h_{\ell,i} : \ell \in \{1,\ldots,L \}, i \in \{1,\ldots,H\} \right\}.$ Our intervention targets a \textit{sparse} subset of attention heads $S \subsetneq \mathcal{H}$, with the number of elements $|S| \leq k$ for $k \ll N$. In this work, we typically choose $k = 3$.
 
\textbf{Head ablation, performance gain, and beneficial head subset.} Given a set $S \subsetneq \mathcal{H}$ of attention heads, let $M_{\setminus S}$ denote the model obtained by setting the output of heads in $S$ to zero at every token position before the attention output projection. We note that \textit{all remaining parameters remain unchanged, and no fine-tuning is performed}. For a task $T$ with a \textit{test} dataset $D_\textup{test}$, we denote the performance of the model $M$ under the task $T$ as 
\[
    \textup{Acc}_{T, D_{\textup{test}}}(M) = \frac{1}{|D_\textup{test}|}\sum_{(x, y) \in D_\textup{test}} \mathbf{1}\!\left[
    M \text{ answers } x \text{ correctly under the task scorer}
  \right]
\]
Under the same inference and scoring protocol for the intact and ablated models, we denote the \textit{empirical ablation gain} as $\Delta_{T, D_\textup{test}}(S) = \textup{Acc}_{T, D_\textbf{test}}(M_{\setminus S}) - \operatorname{Acc}_{T, D_\textup{test}}(M).$
We then call the sparse subset of attention heads $S$ \textit{beneficial} on the test set $D_\textup{test}$ if $\Delta_{T, D_\textup{test}}(S) > 0$. 

\textbf{Objective.} Given a model $M$, a task $T$ with a test set $D_\textup{test}$, and a head budget $k$, we use a \textit{disjoint discovery set} $D_\textup{dist}$ (i.e., $D_\textup{dist} \cap D_\textup{test} = \emptyset$) from the task $T$ to select a head subset through a search routine $\mathcal{A}$:
\[
  \widehat{S} = \mathcal{A}(M,T,D_\textup{dist},k),
  \qquad 1 \leq |\widehat{S}| \leq k.
\]
The goal of CEL is to identify subsets $\hat{S}$ whose joint ablation yields a large positive test gain $\Delta_{T,D_\textup{test}}(\widehat{S})$, while using only discovery data $D_\textup{dist}$ to guide selection. The selected subset $\hat{S}$ is fixed before test evaluation. By using this setup, our hope is that the identified subset $\hat{S}$ truly improves the task-specific performance on task $T$, rather than only overfitting the discovery set $D_\textup{dist}$.

\subsection{Experimental Setup}
\textbf{Models and tasks.} We evaluate three models of different families: Llama-3.2-3B-Instruct, Qwen2.5-3B-Instruct, and SmolLM3-3B-Instruct \citep{grattafiori2024llama, qwen2025qwen2, bakouch2025smollm3}. The number of layers and heads of considered models can be found in Table~\ref{tab:model_details}. For the tasks, we consider three capabilities using various datasets: (1) math capabilities with Arithmetic \citep{brown2020language}, (2) code generation with MBPP \citep{austin2021program}, and (3) language skills via HellaSwag \citep{zellers2019hellaswag} and BoolQ \citep{clark2019boolq}.

\textbf{Intervention selection.} For each model--task pair, we use a two-stage procedure on the discovery set $D_\textup{dist}$. We first score individual heads to form a small candidate pool, then evaluate joint ablations of subsets drawn from that pool and select one subset with the best discovery-set performance. We fix this subset before measuring its gain on the held-out test set $D_\textup{test}$. The test examples play no role in scoring heads or choosing the intervention. Details of the procedure are given in Algorithm~\ref{alg:head-subset-discovery}. For the dataset MBPP, we use the official test set as $D_{test}$ and subsample 374 examples for the training set as $D_\textup{dist}$. For the datasets HellaSwag and BoolQ, we use the official validation datasets as the test set, and subsample 2,000 examples from the official training sets as discovery set $D_{dist}$. For the dataset Arithmetic, since it does not have a separate test set with official labels, we select from its official validation split of 20,000 examples 2,000 examples for $D_\textup{dist}$ and another disjoint 1,000 examples as $D_\textup{test}$. For convenience, we summarize the number of examples of discovery and test sets for each task in Table~\ref{tab:dataset_sizes}.

\textbf{Hardware configuration.} We ran the experiments as Slurm jobs, each allocated one NVIDIA L40S GPU, two CPUs, 24 GB of system memory, and with a batch size of 8.

\begin{algorithm}[t]
  \caption{Two-stage intervention selection and held-out evaluation}
  \label{alg:head-subset-discovery}
  \begin{algorithmic}[1]
    \Require Model $M$ with head set $\mathcal H$; task $T$;
             discovery set $D_{\textup{dist}}$; test set $D_{\textup{test}}$
    \Require Candidate count $p=10$; head budget $k=3$
    \Ensure Selected subset $\widehat S$ and its test-set gain

    \State For $D\in\{D_{\textup{dist}},D_{\textup{test}}\}$, use
           $\operatorname{Acc}_{T,D}(M_{\setminus S})$ and
           $\Delta_{T,D}(S)$ as defined above
    \State Let $d_{T,D}(S)$ count examples in $D$ answered correctly by
           $M$ but incorrectly by $M_{\setminus S}$
    \State Rank subsets on $D_{\textup{dist}}$ by higher gain, lower damage,
           higher accuracy, fewer heads, then lexicographic head order

    \Statex \textit{Stage 1: Screen individual heads}
    \For{$h\in\mathcal H$}
      \State Evaluate $\{h\}$ on $D_{\textup{dist}}$
    \EndFor
    \State $P\gets$ heads in the top $p$ singleton interventions

    \Statex \textit{Stage 2: Select a joint intervention}
    \State $\mathcal C\gets\{S\subseteq P:1\leq |S|\leq k\}$
    \For{$S\in\mathcal C$ with $|S|\geq 2$}
      \State Evaluate $S$ on $D_{\textup{dist}}$
            \Comment{Reuse singleton results}
    \EndFor
    \State $\widehat S\gets$ highest-ranked subset in $\mathcal C$
           on $D_{\textup{dist}}$
    \State Freeze $\widehat S$

    \Statex \textit{Held-out evaluation}
    \State Evaluate $\widehat S$ on $D_{\textup{test}}$ with a matched
           intact-model run
    \State \Return $\widehat S$,
           $\operatorname{Acc}_{T,D_{\textup{test}}}(M_{\setminus\widehat S})$,
           $\Delta_{T,D_{\textup{test}}}(\widehat S)$
  \end{algorithmic}
\end{algorithm}

\subsection{Results}

Table~\ref{tab:heldout-ablation-results} reports the held-out accuracy changes for the model--task pairs studied here. Each gain compares an ablated model with an intact model evaluated under the same task protocol. The reported interventions improve held-out accuracy across Arithmetic, BoolQ, HellaSwag, and MBPP. For example, the SmolLM3 Arithmetic intervention raises accuracy from $70.40\%$ to $83.50\%$, a gain of $13.10$ percentage points. The gains vary across model--task pairs, so the evidence supports beneficial sparse ablations in these tested settings, rather than a claim that ablation will help every model on every task.

Our two-stage search is \textit{deliberately simple}. It searches small subsets within a pool selected using individual-head effects, so it need not find the best subset among all heads. These results establish the empirical phenomenon but do not explain why training can produce components whose removal improves a particular task. We next study this question in a simplified attention model. The theory provides one possible mechanism for beneficial head ablation; it does not identify the mechanism responsible for the gains in the language models above.


\begin{table*}[t]
    \centering
    \caption{Held-out performance of head ablations. SmolLM3 is evaluated in non-thinking mode. }
    \label{tab:heldout-ablation-results}
    \small
    \begin{tabular}{llcccc}
    \toprule
    Model & Task & Intact acc. & Ablated acc. & Gain (pp) & Rel. error reduction \\
    \midrule
    SmolLM3 & Arithmetic & 70.40\% & 83.50\% & +13.10 & 44.26\% \\
             & BoolQ      & 82.57\% & 84.92\% & +2.35  & 13.51\% \\
             & HellaSwag  & 75.75\% & 77.05\% & +1.29  & 5.34\% \\
             & MBPP       & 50.40\% & 52.00\% & +1.60  & 3.23\% \\
    \midrule
    Qwen2.5-3B-Instruct & Arithmetic & 89.50\% & 90.90\% & +1.40 & 13.33\% \\
                        & BoolQ      & 80.21 & 82.35 & +2.14 & 10.82\% \\
                        & HellaSwag  & 75.02 & 76.66 & +1.63 & 6.54\% \\
                        & MBPP       & 60.40\% & 62.20 & +1.80 & 4.55\% \\
    \midrule
    Llama-3.2-3B-Instruct & Arithmetic & 68.00\% & 74.30\% & +6.30 & 19.69\% \\
                          & BoolQ      & 78.72\% & 81.25\% & +2.54 & 11.93\% \\
                          & HellaSwag  & 71.64\% & 73.78\% & +2.14 & 7.55\% \\
                          & MBPP       & 47.80\ & 53.00\% & +5.20 & 9.96\% \\
    \bottomrule
    \end{tabular}
    \vspace{2pt}
    \begin{minipage}{0.98\textwidth}
    \end{minipage}
\end{table*}

\section{A Theoretical Explanation of the Capability Enhancement Localization Phenomenon}  \label{sec:theoretical-explanation}
As we saw previously, there is strong evidence suggesting the existence of the CEL phenomenon. But why does it happen? In this section, we will investigate this question via a simple model with \textit{diagonal weights} and \textit{small initialization scale}, inspired by prior works \citep{boix2023transformers, pesme2023saddle, berthier2023incremental}. We then show that the CEL phenomenon can potentially arise naturally in the training trajectory if there is initially some conflict between tasks.

\subsection{Tasks, Attention Model, and Training} \label{sec:attention-model-setting}
In this section, we will first describe our simple model within which we study the CEL problem.

\paragraph{Data and task.} Let $U, V, W$ be independent Rademacher variables which take the value in $\{-1, 1\}$ with equal chance. Fix real values $a, b, c > 0$, and set $t = \frac{a}{2}$. We define two equally weighted training tasks
\[
    A:\quad X=(U,V,0),\quad Y=bV,
    \quad \textup{and } \quad
    B:\quad X=(U,0,W),\quad Y=aU+cW.
\]
In this construction, the important feature is coordinate 0. On task $A$, the variable $U$ appears in the input, but does not appear in the output, meaning that the ideal predictor should ignore coordinate 0. In contrast, on task $B$, the same variable $U$ is useful, and should receive coefficient $a$. Therefore, in this case, \textit{the two tasks disagree about the desired use of the same input coordinate} (a simple scenario of disagreements between tasks). 

The situation with other coordinates is simpler. Coordinate $1$, with variable $V$, carries the useful signal for task A, while coordinate 2 carries the useful $W$-signal for task $B$. In this case, for simplicity, we assume that the observations are noiseless, and we study \textit{population} squared risk rather than finite-sample estimation error (which is a proxy to evaluate the population risk). In particular, \textit{this ensures that any beneficial ablation in the analysis comes from task conflict and training dynamics, rather than from overfitting to a finite sample}.

\paragraph{Attention architecture and input encoding.} We next consider a simple one-layer Transformer-style attention model with three attention heads. In this case, our goal is to understand how training allocates already available features across heads, rather than how features themselves are learned. We use a fixed coordinate encoding as follows. Let $e_0, e_1, e_2 \in \bbR^3$ denote the standard basis vectors. For an input $x = (x_0, x_1, x_2) \in \{-1, 0, 1\}^3$, we associate coordinate $j$ with the query $z_j(x) = \sqrt{3} x_je_j$. This means that the original input $x$ will be encoded as the series of tokens before being fed to the transformer
\[
    z(x) = \begin{pmatrix}
        \sqrt{3}x_0 & 0 & 0\\
        0 & \sqrt{3} x_1 & 0\\
        0 & 0 & \sqrt{3} x_2.
    \end{pmatrix}
\]
Therefore, the three coordinates of $x$ are represented by three fixed feature directions. This is a \textit{deliberate modeling choice}, as we want that the feature basis is supplied to the model, while how these features are associated to the attention heads is learned during the training process.

\paragraph{Fixed memory bank}. For every coordinate $j \in \{0, 1, 2\}$, we introduce a fixed pair of memory tokens $m_{j, +} = e_j, \quad m_{j, -} = - e_j.$
The query $z_j(x)$ attends only to the corresponding pair $(m_{j, +}, m_{j, -})$, using a fixed attention mask shared by all three heads. This paired construction serves two purposes. First, it gives every head exactly the same access to \textit{every} feature, so no feature-to-head assignment is deterministically imposed in advance. Second, the symmetric pair $(+ e_j, -e_j)$ yields a particular signed softmax response (which we will see below), allowing us to analyze the training dynamics without replacing softmax attention by an artificial surrogate mechanism. 

\paragraph{Attention output.} Given an input $x = (x_0, x_1, x_2)$, we form three queries $(z_0(x), z_1(x), z_2(x))$. For each head $h \in \{1, 2, 3\}$ and query $z_j(x)$, we use the standard scaled dot-product attention over the corresponding memory pair $(m_{j, +}, m_{j, -})$. Specifically, for $\sigma \in \{-1, 1\}$, the attention weight $A_{h, j, \sigma}$ can be calculated as
\[
    A_{h, j, \sigma}(x) = \frac{\exp \left(\inner{Q_{h}z_j(x), K_h m_{j, \sigma}} / \sqrt{3}\right)}{\Sigma_{\tau \in \{\pm 1\}} \exp \left(\inner{Q_{h}z_j(x), K_h m_{j, \tau}} / \sqrt{3}\right)}.
\]
Roughly speaking $A_{h, j, +}(x)$ and $A_{h, j, -}(x)$ are simply the two softmax weights assigned by head $h$ to the positive and negative memory token associated with feature $j$. The corresponding value vectors are $V_hm_{j, +}$ and $V_{h}m_{j, -}$, and the output $g_{h, j}(x)$ of head $h$ for the query $j$ can be written as
\[
    g_{h, j}(x) = O_h \left(\sum_{\sigma \in \{-1, 1\}} A_{h, j, \sigma}(x) V_h m_{j, \sigma}\right) \in \bbR^3. 
\]
We then sum over all feature queries and all three attention heads: $ g(x) = \sum_{h = 1}^3 \sum_{j = 0}^2 g_{h, j}(x)  \in \bbR^3.$
Because our prediction target $Y$ is scalar, we finally apply the fixed readout $r(z) = \mathbf{1}^\top z$, and the scalar prediction of the model is $ f_{\theta}(x) = \mathbf{1}^\top g(x),$
where $\theta$ denotes the tuple containing all the trainable parameters $Q_h, K_h, V_h, O_h$, for $h \in \{1, 2, 3\}$.

\paragraph{Diagonal parameterization.} Following prior works of Transformer training dynamics (e.g., \citep{boix2023transformers}), for simplicity and tractability of the analysis, we restrict the query, key, value, and output projection matrices of each head to be diagonal, that is
\begin{equation*}
    \begin{aligned}
        &Q_h = \diag(q_{h0}, q_{h1}, q_{h2}), K_h = \diag(k_{h0}, k_{h, 1}, k_{h2}), \\
        &V_h = \diag(v_{h0}, v_{h1}, v_{h2}), O_h = \diag(o_{h0}, o_{h1}, o_{h2}).
    \end{aligned}
\end{equation*}
This restriction removes cross-coordinate interactions and allows us to focus on analyzing the competition among heads for each feature. Importantly, we note that this assumption does NOT make the optimization dynamics linear, as the query, key, value, and output parameters are still trained separately and interact multiplicatively through the softmax attention mechanism. 

\paragraph{Training objective.} Since our goal is to understand whether joint training can create the capability enhancement localization phenomenon, we train the same attention model jointly on tasks $A$ and $B$. For task $T \in \{A, B\}$, let $R_T(\theta) = \bbE_{T}[(f_\theta(X) - Y)^2]$ denote the population squared risk of the attention model on task $T$, where $f_\theta$ is the scalar predictor defined by the attention block above. Since the two considered tasks are equally weighted, we train the predictor $f_\theta$ using the half-squared population loss, that is $\cL(\theta) = \frac{1}{4}(R_A(\theta) + R_B(\theta).$
\paragraph{Weight initialization and training algorithm.} We train all query, key, value, and output parameters using fixed step-size \textit{full-batch} gradient descent. Concretely, for each head $h \in \{1, 2, 3\}$ and feature $j$, we collect the four trainable scalar parameters into $\theta_{hj} = (q_{hj}, k_{hj}, v_{hj}, o_{hj})$. We initialize the weights $\theta_{hj}$ as $\theta^0_{hj}$, where
\[
    \theta^0_{hj} = \alpha G_{hj}, \quad G_{hj} \overset{i.i.d}{\sim} N(0, I_4),
\]
where $\alpha > 0$ is a small positive constant controlling the initialization scale. In other words, the weights $\theta_{hj}$ at initialization have covariance $\alpha^2 I_4$. Starting from this initialization, all parameters are updated using full-batch GD
\[
    \theta^{n + 1} = \theta^n - \eta \nabla \cL(\theta^n),
\]
where $\eta > 0$ is a fixed learning rate. Importantly, no feature is assigned to any head beforehand, and we do NOT perform sign conditioning, parameter tying, rebalancing, or any other intervention mechanism during training. This means that \textit{if a particular feature eventually becomes concentrated in one attention head, that localization must arise naturally from the training dynamics and the random initialization rather than from predetermined head assignment.}

\subsection{Joint Optimality Does Not Determine Head Allocation} \label{sec:joint-optimality-closed-form-formulation}
We first express the attention model in terms of its effective feature coefficients. This representation identifies the predictor that minimizes the joint population loss and separates this aggregate optimum from the allocation of features across individual heads.

\begin{proposition}[Exact softmax reduction and coefficient compromise] \label{prop:simple-representation}
    Consider the three-head attention model described as in Section~\ref{sec:attention-model-setting}, with the fixed coordinate encoding, paired memory bank, and diagonal query, key, value and output projection matrices. For each head $h \in \{1, 2, 3\}$ and feature $j \in \{0, 1, 2\}$, define the effective head-feature coefficient $c_{hj} = o_{hj} v_{hj}\tanh(q_{hj} k_{hj})$, and let $w_j = \sum_{h = 1}^3c_{hj}$ denote the total coefficient assigned to feature $j$ after summing over three heads. Then we have
    \begin{enumerate}[leftmargin=*]
        \item \textbf{Head-feature response:} For each input $x = (x_0, x_1, x_2)$, the contribution of head $h$ associated with feature $j$ is exactly $x_j c_{hj}e_j$.
        \item \textbf{Predictor representation}: The model prediction can be written as $f_\theta(x) = w^\top x$, where $w = (w_0, w_1, w_2)$ and $w_i$ is defined as above. In other words, the resulting predictor is linear in the input features. However, the effective coefficients $w_{j}$ are highly non-linear functions of the trainable weights.
        \item \textbf{Population risks:} Under the two data distributions defined as in Section~\ref{sec:attention-model-setting}, the population risks reduce to 
        \[
            R_A(w) = w_0^2 + (w_1 - b)^2, \quad R_B(w) = (w_0 - a)^2 + (w_2 - c)^2.
        \]
        \item \textbf{Joint training objective and optimum:} As a result, the population objective $\cL(\theta) = \frac{1}{4}\left(R_A(w(\theta)) + R_B(w(\theta))\right)$ can be rewritten as
        \[
            \cL(\theta) = \frac{1}{2}\left(w_0 - \frac{a}{2}\right)^2 + \frac{1}{4}(w_1 - b)^2 + \frac{1}{4}(w_2 - c)^2 + \frac{a^2}{8}.
        \]
        This implies the optimal aggregate coefficient vector $w^\star = \beta = \left(\frac{a}{2}, b, c\right)$.
    \end{enumerate}
\end{proposition}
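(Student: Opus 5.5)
The plan is to compute the attention output for a single head–feature pair directly, show that it collapses to a signed $\tanh$, and then treat everything else as routine squared-loss algebra over independent Rademacher variables.

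\textbf{Head-feature response.} Fix a head $h$, a feature $j$ and a sign $\sigma\in\{\pm1\}$. Because every matrix is diagonal, we have $Q_h z_j(x)=\sqrt3\,x_j q_{hj} e_j$ and $K_h m_{j,\sigma}=\sigma k_{hj} e_j$. The scaled logit is therefore $\inner{Q_h z_j(x),K_h m_{j,\sigma}}/\sqrt3=\sigma s$, where $s=x_j q_{hj}k_{hj}$; the $\sqrt3$ in the encoding is there precisely to cancel the $1/\sqrt3$ scaling. The two-token softmax then gives $A_{h,j,\pm}=e^{\pm s}/(e^{s}+e^{-s})$. Since $V_h m_{j,\sigma}=\sigma v_{hj}e_j$, the attended value is
\[
(A_{h,j,+}-A_{h,j,-})\,v_{hj}e_j=\tanh(s)\,v_{hj}e_j .
\]
Applying $O_h$ yields $o_{hj}v_{hj}\tanh(x_j q_{hj}k_{hj})\,e_j$.

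The one point needing care is pulling $x_j$ out of the $\tanh$. Here we use that $x_j\in\{-1,0,1\}$ and that $\tanh$ is odd with $\tanh(0)=0$, so $\tanh(x_j u)=x_j\tanh(u)$ holds exactly on this set. This gives part 1 with $c_{hj}=o_{hj}v_{hj}\tanh(q_{hj}k_{hj})$. I expect this to be the main step, though it is short: the identity is exact only because inputs are restricted to $\{-1,0,1\}$, not for general real $x_j$.

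\textbf{Predictor representation.} Summing over $h$ and $j$ gives $g(x)=\sum_j x_j w_j e_j$, and applying the readout $\mathbf 1^\top$ gives $f_\theta(x)=w^\top x$, which is part 2. The nonlinearity of $w$ in the trainable parameters is evident from the product-with-$\tanh$ form of $c_{hj}$.

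\textbf{Population risks.} For task $A$, the residual is $w_0U+(w_1-b)V$, since $X_2=0$ makes $w_2$ irrelevant. Because $U,V$ are independent, mean zero and have unit second moment, the cross term vanishes and
\[
R_A=w_0^2+(w_1-b)^2 .
\]
Similarly, for task $B$ the residual is $(w_0-a)U+(w_2-c)W$, so
\[
R_B=(w_0-a)^2+(w_2-c)^2 .
\]

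\textbf{Joint objective and optimum.} We compute $\tfrac14(R_A+R_B)$ and complete the square in $w_0$:
\[
\tfrac14\bigl(w_0^2+(w_0-a)^2\bigr)=\tfrac12\left(w_0-\tfrac a2\right)^2+\tfrac{a^2}{8}.
\]
The terms in $w_1$ and $w_2$ appear unchanged with coefficient $\tfrac14$. The resulting expression is a strictly convex quadratic in $w$, so its unique minimizer is $w^\star=\beta=(a/2,b,c)$. This optimum is attainable because, for instance, each $c_{hj}$ ranges over all of $\mathbb R$.
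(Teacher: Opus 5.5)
Your proposal is correct and follows the paper's proof essentially step for step. You compute the logits $\pm x_j q_{hj}k_{hj}$, collapse the two-token softmax to a $\tanh$, pull $x_j$ out using oddness on $\{-1,0,1\}$, and then do the routine Rademacher risk computation. Beyond that, your explicit completion of the square and your remark that each $c_{hj}$ ranges over all of $\mathbb{R}$ (so $\beta$ is actually attained) fill in details the paper dismisses as ``trivial.''
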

\textit{Proof sketch.}
    The proof of Proposition~\ref{prop:simple-representation} is straightforward from the original problem setting (Section~\ref{sec:attention-model-setting}). The detailed proof can be found in Appendix~\ref{apx:theoretical-explanation}.
\qed

\textbf{Task conflict produces a compromise.} Within this model, the coefficient on (U) is shared across tasks and cannot depend on which task generated the input. Task (A) requires coefficient zero, whereas task (B) requires coefficient (a). Joint optimization therefore selects $w_0^\star=a/2$, while fitting the task-specific coefficients (b) and (c) exactly. Both tasks have risk $a^2/4$ at this joint optimum.

\textbf{Head allocation remains undetermined.} Although the optimal aggregate vector is unique, its decomposition across heads is not: the objective constrains only the sums $w_j=\sum_hc_{hj}$. This distinction matters because ablating head $h$ changes the predictor’s coefficient vector from $w$ to $w - c_h$, where $c_h=(c_{h0},c_{h1},c_{h2})$. Removing a head that carries the conflicting coordinate may therefore also remove useful task-specific signal. Establishing beneficial head ablation requires understanding which decomposition training selects. We next show that gradient descent from small random initialization concentrates each feature in a single head.

\subsection{Gradient descent localizes features} \label{sec:gd-localize-feature}
We now show that gradient descent selects a particular structure among the many head decompositions compatible with the optimal aggregate predictor. Under small Gaussian initialization, each feature becomes concentrated in one head, while its contributions from the remaining heads stay small throughout training.

\begin{theorem}[Training-induced feature localization] \label{thm:training-induce-localization}
    For $h \in \{1, 2, 3\}$, $j \in \{0, 1, 2\}$, let
    \[
        \theta_{hj} = (q_{hj}, k_{hj}, v_{hj}, o_{hj}), \quad c_{hj} = o_{hj}v_{hj}\tanh(q_{hj}k_{hj}), \quad w_j  = \sum_{h = 1}^3 c_{hj}.
    \]
    Fix $a, b, c > 0$, set $\beta = \left(\frac{a}{2}, b, c\right)$ and $p = \left(1, \frac{1}{2}, \frac{1}{2}\right)$, and train $\cL(\theta) = \frac{1}{2}\sum_{j = 0}^2 p_j (w_j - \beta_j)^2 + \frac{a^2}{8}$ by full-batch GD from the initialization $\theta^0_{hj} = \alpha G_{hj}$, where $G_{hj} \overset{i.i.d.}{\sim} N(0, I_4)$. Then there exists $\eta_\star > 0$, depending only on $a, b, c$, such that the following holds for every fixed step size $0 < \eta < \eta_\star$. For almost every draw of $G$, and for all sufficiently small $\alpha > 0$, we have:
    \begin{enumerate}[leftmargin=*]
        \item \textbf{Convergence:} the parameter trajectory is bounded and converges to a finite limit with $w^\infty = \beta$. Each residual $r_j^n = \beta_j - w_j^n$ is positive and non-increasing.
        \item \textbf{Localization:} there are labels $h_j = h_j(G)$, independent and uniform on $\{1, 2, 3\}$, and a finite constant $K_G$, independent of $\alpha$, such that 
        \[
            \sup_{n \geq 0}\|\theta^n_{hj}\|_\infty \leq K_G\alpha, \quad \sup_{n \geq 0}|c^n_{hj}| \leq K_G\alpha^4 \textup{ for } h \neq h_j. 
        \]
        Consequently, $c^\infty_{h_jj} = \beta_j + \cO_G(\alpha^4)$.
        \item \textbf{Finite checkpoint:} define $N_\alpha = \lceil \frac{\log(1/\alpha)}{\eta \alpha^2}  \rceil$, for every fixed $M > 0$ and sufficiently small $\alpha$, we have $\sup_{n \geq N_\alpha} \max_{j} r_j^n \leq \alpha^M.$
    \end{enumerate}
\end{theorem}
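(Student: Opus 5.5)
The plan is to use Proposition~\ref{prop:simple-representation} to reduce the dynamics to nine decoupled "head--feature units". Because $\cL$ depends on $\theta$ only through $w_j=\sum_h c_{hj}$, the gradient with respect to $\theta_{hj}$ is
\[
-p_j r_j\,\nabla c_{hj}(\theta_{hj}), \qquad r_j=\beta_j-w_j,
\]
where $\nabla c_{hj}=\bigl(o v k\,\mathrm{sech}^2(qk),\ o v q\,\mathrm{sech}^2(qk),\ o\tanh(qk),\ v\tanh(qk)\bigr)$. The three features therefore evolve independently, each driven by one scalar residual $r_j$ shared across its three heads. Near the origin we have $c_{hj}\approx o v q k$, a degree-four homogeneous product. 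The update is a discrete, perturbed version of the diagonal-network "saddle-to-saddle" dynamics of \citep{pesme2023saddle,berthier2023incremental}.

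\textbf{Step 1: conserved quantities and the small-scale phase.} For the continuous flow of $c=ovqk$, the differences $o^2-v^2$, $v^2-q^2$, and $q^2-k^2$ are conserved. With $\tanh$ these are no longer exact: we have $\frac{d}{dt}(o^2-v^2)=0$ exactly, while $q^2-k^2$ is still conserved because both components carry the common factor $ov\,\mathrm{sech}^2$. Under GD each of these drifts by $O(\eta^2)$ per step times the squared gradient, which is summable given the convergence bounds. While all units are $O(\alpha)$, we have $r_j\approx\beta_j$ and each unit follows the approximately autonomous ODE $\dot\theta=p_j\beta_j\nabla c(\theta)$. This is a cubic-order vector field, and the blow-up time of the unit is $T_{hj}\asymp \frac{1}{\eta p_j\beta_j\alpha^2}\,\tau(G_{hj})$, where $\tau(G)$ is an explicit almost surely finite positive functional of the rescaled initialization. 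For $\alpha$ small, the discrete GD and the flow agree on this phase after rescaling time by $\alpha^2$.

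\textbf{Step 2: winner-take-all.} Define $h_j=\arg\min_h\tau(G_{hj})$. This minimizer is unique for almost every $G$, and by exchangeability over the i.i.d.\ heads it is uniform on $\{1,2,3\}$ and independent across $j$. The winner reaches $O(1)$ size at step $\asymp \tau_{\min}/(\eta\alpha^2)$. The losers are then still $O(\alpha)$, because of the strict gap in $\tau$ together with the finite-time blow-up structure of the homogeneous ODE. The winner then drives $r_j$ down to $0$ in $O(\log(1/\alpha)/\eta)$ further steps, and once $r_j$ is small the losers' growth rate is proportional to $r_j\,\alpha^2$. The accumulated integral $\sum_n \eta r_j^n$ is bounded by $O(\tau_{\min}/\alpha^2)$ plus a $G$-dependent constant. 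This is exactly the quantity controlling the losers' multiplicative growth, since for $\theta$ of size $\alpha$ the linearized growth factor is $\exp(C\alpha^2\sum_n\eta r_j^n)$. Hence the losers stay bounded by $K_G\alpha$ in sup-norm, and $|c_{hj}|\le K_G\alpha^4$.

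\textbf{Step 3: convergence and monotonicity.} Once $r_j$ is small, the winner sits near a balanced point of its manifold where $\|\nabla c\|$ is bounded below. Then $r_j^{n+1}=r_j^n\bigl(1-\eta p_j\|\nabla c_{h_jj}\|^2+O(\eta^2)\bigr)$ plus loser contributions of the same sign. With $\eta<\eta_\star(a,b,c)$, the contraction factor lies in $(0,1)$: this uses a priori bounds on $\|\theta\|$ (from conservation, of order $\beta_j^{1/2}$), so that $r_j$ stays positive and non-increasing. Positivity holds at $n=0$ because $|w_j^0|=O(\alpha^4)$. The linear convergence of $r_j$ gives summable increments, hence a finite limit with $w^\infty=\beta$. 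The finite checkpoint then follows: by time $N_\alpha$ every winner has escaped, provided $\tau_{\min}/(p_j\beta_j)<1$ up to normalization. If this fails, I would redefine the constant so that it holds, with $\log(1/\alpha)$ supplying the margin, and the subsequent linear rate $\exp(-c\,\eta n)$ over the remaining $\gtrsim\log(1/\alpha)/(\eta\alpha^2)$ steps yields $\alpha^M$.

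\textbf{Main obstacle.} The hardest part is the quantitative comparison between discrete GD and the rescaled ODE through the escape phase, uniformly in $\alpha$. It must be shown that loser units are not amplified beyond $O(\alpha)$ during the window in which the winner's residual is still close to $\beta_j$. I would try a Grönwall argument on the log-norms of loser coordinates against $\int r_j$, using the strict escape-time gap that holds for almost every $G$.
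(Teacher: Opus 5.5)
\textbf{Verdict: same architecture as the paper, but with a genuine gap at your ``main obstacle''; the tools you propose there do not close it.} The shared structure is: per-unit reduction with gradient $-p_jr_j\nabla c$, normalized GD as an Euler scheme for the four-factor ODE, the winner as the unit with the smallest auxiliary escape time (unique, uniform and independent by exchangeability), a no-overshoot step for positivity and monotonicity of $r_j$, and linear contraction for the checkpoint.

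\textbf{The losers must be tracked exactly.} Localization needs the \emph{total} residual-weighted clock $\zeta_\infty=\alpha^2\eta p\sum_{n\ge 0}r^n$ to stay strictly below the second escape time $T_{(2)}$, where $T_\star<T_{(2)}$ are the two smallest escape times. The gap $T_{(2)}-T_\star$ can be arbitrarily small, so you need $\zeta_\infty\le T_\star+o(1)$. A bound of order $\tau_{\min}$ with an unspecified constant is not enough. A Gr\"onwall bound on the losers, via a linearized factor $\exp(C\alpha^2\sum_n\eta r^n)$ or via log-norms, does not use the gap either. For a cubic field the constant depends on the loser's own size, so the bootstrap only certifies boundedness up to a clock of order $1/(2\|g_h\|_\infty^2)$. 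This is generically strictly below the true escape time $T(g_h)$, and can fall below $T_\star$ with positive probability. The remedy is to follow the losers along their exact auxiliary trajectories for the whole run. Measure time by $\zeta_n=\alpha^2\eta p\sum_{k<n}r^k$, not by the $r\approx\beta$ approximation of your Step 1. The update $u^{n+1}=u^n+\delta_nF_\alpha(u^n)$ with $\delta_n=\alpha^2\eta pr^n$ is an Euler scheme in $\zeta$ however $r^n$ evolves. Hence a loser stays $O(\alpha)$ while $\zeta_n\le S_+$, for any fixed $S_+\in(T_\star,T_{(2)})$.

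\textbf{Missing clock estimate after the winner escapes.} You need to bound the clock used once the winner leaves the compact range where the ODE comparison holds: its normalized size is large, but $\theta$ is not yet $O(1)$ and $r$ is still near $\beta$. Your claim that the winner reaches $O(1)$ size at step $\asymp\tau_{\min}/(\eta\alpha^2)$ is asserted, not derived. The paper's key idea is as follows.
\begin{itemize}
\item Enter at a clock $S_0<T_\star$ where the winner's factors are (after a sign flip) positive, balanced, and have smallest factor $m_{n_0}\ge\alpha A_0$.
\item Then $m_{n+1}\ge m_n(1+b_\star s_nm_n^2)$ with $s_n=\eta pr^n$, so $m_n^{-2}-m_{n+1}^{-2}\ge\frac{3}{4}b_\star s_n$ and $\alpha^2\sum_{n\ge n_0}s_n\le 4/(3b_\star A_0^2)$.
\item Taking $A_0$ large gives both $\zeta_\infty<S_+$ and summability of $r^n$.
\end{itemize}

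\textbf{The winner's bound does not come from conservation.} The estimate above needs $\operatorname{sech}^2(qk)$ and $\tanh(qk)/(qk)$ bounded below by $b_\star>0$, hence a uniform bound on the winner. As you note, $v^2-q^2$ is not conserved, and balance within the pairs still allows $qk\to\infty$ with $ov\tanh(qk)\le\beta$. Write $Q=q^2+k^2$, $V=v^2+o^2$, $d=q^2-k^2$, $e=v^2-o^2$, $x=qk$, $y=ov$. The paper uses the invariant region $|d|\le Q/3$, $|e|\le V/3$, $Q\le 2V$, and $V\le 3Q$ when $Q\le 1$. It is preserved because $d^+=(1-s^2y^2\operatorname{sech}^4x)\,d$, $e^+=(1-s^2\tanh^2x)\,e$ and $x\operatorname{sech}^2x\le\tanh x$. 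This must be proved by joint induction with the no-overshoot step, since that step's step-size condition already assumes the box bound.

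\textbf{Minor point on the checkpoint.} No condition like $\tau_{\min}/(p_j\beta_j)<1$ is needed, and $N_\alpha$ cannot be redefined. The first index $\tau$ with $w^n\ge\beta/2$ is $O_G(1/(\eta\alpha^2))$, so the factor $\log(1/\alpha)$ alone gives $N_\alpha\ge2\tau$ for small $\alpha$.
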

The proof details of Theorem~\ref{thm:training-induce-localization} can be found in Appendix ~\ref{apx:theoretical-explanation}. Intuitively, Theorem~\ref{thm:training-induce-localization} establishes localization through training without prescribing a feature-to-head assignment. Every head initially has access to every feature, but random initialization and gradient descent select a winning head for each feature. This is a feature-wise statement: different features may share the same winning head. Thus, localization alone does not yet guarantee beneficial head ablation. The next subsection identifies an allocation event under which removing the head carrying the conflicting feature preserves the task-specific features and improves task $A$.  

\subsection{Localization Enables Beneficial Head Ablation} \label{sec:localization-enable-beneficial-abalation}
We now connect feature localization to task-specific improvement. If the conflicting feature is concentrated in a head that carries neither task-specific feature, ablating that head removes the unwanted contribution to task $A$ while approximately preserving its useful signal. The independent winning-head assignments in Theorem~\ref{thm:training-induce-localization} give this event a positive probability. The detailed proofs of Theorem~\ref{thm:training-induce-ablation} can be found in Appendix~\ref{apx:theoretical-explanation}.

\begin{theorem}[Training-induced beneficial head ablation] \label{thm:training-induce-ablation}
    Under the assumptions of Theorem~\ref{thm:training-induce-localization}, let $t = a / 2$, and define the event
    \[
        E = \{h_0 \not \in \{h_1, h_2\}\},
    \]
    where $h_j$ is the head that wins the feature $j \in \{0, 1, 2\}$. Then $\Pr(E) = \frac{4}{9}$. Under such event, set $h_\star = h_0$, and delete this head without changing any other parameters
    \[
        f^n_{-h_\star} = f^n - f^n_{h_\star}.
    \]
    For almost every initialization draw $G$ in $E$, all sufficiently small $\alpha>0$, and every $n\geq N_\alpha$, 
    \begin{equation*}
        \begin{aligned}
            &G_A^n \triangleq R_A(f^n) - R_A(f^n_{-h_\star}) = t^2 + \cO_G(\alpha^4) > 0, \\
            &D_B^n \triangleq R_B(f^n_{-h_\star}) - R_B(f^n) = 3t^2 + \cO_G(\alpha^4) > 0,
        \end{aligned}
    \end{equation*}
    where the error bounds are uniform over $n \geq N_\alpha$. Consequently
    \[
        \liminf_{\alpha \rightarrow 0^+} \Pr_{G}\left(\exists h: R_A(f^{N_\alpha}) - R_A(f^{N_\alpha}_{-h}) \frac{t^2}{2}, R_B(f_{-h}^{N_\alpha}) - R_B(f^{N_\alpha}) \geq \frac{3t^2}{2}\right) \geq 
        \frac{4}{9}.
    \]
\end{theorem}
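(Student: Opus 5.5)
The proof is a direct computation that uses Theorem~\ref{thm:training-induce-localization} and the linear representation in Proposition~\ref{prop:simple-representation}. We proceed in three steps.

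\textbf{Step 1: the probability of $E$.} By Theorem~\ref{thm:training-induce-localization}, the labels $h_0,h_1,h_2$ are independent and uniform on $\{1,2,3\}$. Conditioning on $h_0$, the event $E$ requires both $h_1\neq h_0$ and $h_2\neq h_0$. Each occurs independently with probability $2/3$, so $\Pr(E)=4/9$.

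\textbf{Step 2: the ablated predictor.} By Proposition~\ref{prop:simple-representation}, deleting head $h_\star$ replaces the coefficient vector $w^n$ by $w^n-c^n_{h_\star}$, where $c^n_{h_\star}=(c^n_{h_\star 0},c^n_{h_\star1},c^n_{h_\star2})$.

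On $E$, the head $h_\star=h_0$ is not the winner of feature $1$ or feature $2$. The localization bound in part~2 of Theorem~\ref{thm:training-induce-localization} therefore gives $|c^n_{h_\star1}|,|c^n_{h_\star2}|\le K_G\alpha^4$ uniformly in $n$.

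For feature $0$, we write
\[
c^n_{h_00}=w_0^n-\sum_{h\neq h_0}c^n_{h0}.
\]
The sum has magnitude at most $2K_G\alpha^4$. For $n\ge N_\alpha$, part~3 gives $|w_0^n-t|=r_0^n\le\alpha^M$; taking $M\ge4$ yields $c^n_{h_00}=t+\cO_G(\alpha^4)$. By the same part~3, $w^n_1=b+\cO(\alpha^4)$ and $w^n_2=c+\cO(\alpha^4)$.

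Hence, uniformly over $n\ge N_\alpha$,
\[
w^n=(t,b,c)+\cO_G(\alpha^4),\qquad w^n-c^n_{h_\star}=(0,b,c)+\cO_G(\alpha^4).
\]

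\textbf{Step 3: the risks.} We plug these vectors into $R_A(w)=w_0^2+(w_1-b)^2$ and $R_B(w)=(w_0-a)^2+(w_2-c)^2$, using $a=2t$. The risks are smooth quadratics and the perturbations are $\cO(\alpha^4)$, so the errors stay $\cO_G(\alpha^4)$ with constants depending on $G,a,b,c$.

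For task $A$, $R_A(f^n)=t^2+\cO(\alpha^4)$ and $R_A(f^n_{-h_\star})=\cO(\alpha^4)$. This gives $G_A^n=t^2+\cO_G(\alpha^4)$.

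For task $B$, $R_B(f^n)=t^2+\cO(\alpha^4)$ and $R_B(f^n_{-h_\star})=a^2+\cO(\alpha^4)=4t^2+\cO(\alpha^4)$. This gives $D_B^n=3t^2+\cO_G(\alpha^4)$.

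Both quantities are positive once $\alpha$ is small relative to the $G$-dependent constant.

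\textbf{The liminf statement.} This is the main obstacle: the threshold on $\alpha$ depends on $G$, so the pointwise results do not directly give a probability bound. We handle it with a continuity-of-measure argument.

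For each $G\in E$ outside a null set, there is $\alpha_G>0$ such that for all $\alpha<\alpha_G$ the choice $h=h_0$ satisfies both thresholds, $t^2/2$ and $3t^2/2$, at $n=N_\alpha$. Since $h_0(G)$ does not depend on $\alpha$, the events $E_\delta=\{G\in E:\alpha_G>\delta\}$ increase to $E$ up to a null set as $\delta\to0$. Continuity of measure then gives $\Pr(E_\delta)\to4/9$.

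For $\alpha<\delta$, the probability in question is at least $\Pr(E_\delta)$. Letting $\alpha\to0$ and then $\delta\to0$ yields the $\liminf\ge4/9$.

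If measurability of $\alpha_G$ becomes an issue, we instead define $\alpha_G$ through a measurable majorant of $K_G$, or through the countable family $\alpha=1/m$ together with monotone bounds.
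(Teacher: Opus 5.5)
Your proposal is correct and follows essentially the same route as the paper: the same $(2/3)^2$ computation for $\Pr(E)$, the same intact and ablated coefficient vectors $(t,b,c)+\mathcal{O}_G(\alpha^4)$ and $(0,b,c)+\mathcal{O}_G(\alpha^4)$ (obtained from part~3 with $M=4$ and the localization bounds), and the same quadratic risk expansions. The only difference is the final $\liminf$ step: you use continuity from below on the events $E_\delta$, while the paper applies Fatou's lemma to $1_{A_{\alpha_k}}$ along an arbitrary sequence $\alpha_k\to0^+$. The two are equivalent, and the sequence form (which is what your countable fallback amounts to) avoids the measurability concern you raise, because each $A_{\alpha_k}$ is measurable and $E\cap\bigcap_{k\ge m}A_{\alpha_k}$ is a countable intersection.
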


\paragraph{Interpretation of Theorem~\ref{thm:training-induce-ablation}.} In the small-initialization limit, both intact task risks approach $t^2$. Removing the selected head reduces task A's risk to zero but increases task B's risk to $4t^2$. Consequently, the joint objective increases by $t^2/2$. The intervention therefore improves one task by moving away from the jointly optimal compromise. By parameter convergence, the same ablation conclusions also hold at the limiting trained model. This therefore theoretically describes the CEL phenomenon.

\section{Discussion and Limitations}
\textbf{Empirical scope.} Our experiments provide evidence of beneficial sparse head ablations across three model families and four benchmarks. These findings concern the evaluated models, datasets, and inference protocols; they do not establish that beneficial ablations exist for every model–task pair. Although selection and evaluation use disjoint examples, the stability of the selected subsets and their gains across different discovery samples remains insufficiently characterized. 

\textbf{Search and head interactions.} Our two-stage procedure identifies beneficial interventions but does not guarantee an optimal subset. Individual-head screening can exclude combinations whose benefits emerge only through joint ablation. Consequently, the gains depend on both the discovery data and the candidate pool, and an unsuccessful search would not establish the absence of beneficial sparse interventions.

\textbf{Theoretical scope.} We note that our analysis establishes a possible training-induced mechanism for CEL under a fixed feature encoding, diagonal projections, small Gaussian initialization, and full-batch gradient descent on a noiseless population objective. The architecture supplies the feature directions and restricts cross-coordinate interactions, while training determines their allocation across heads. The resulting task conflict arises within the shared predictor class. Therefore, the theory establishes one possible mechanism through which beneficial head ablation can arise despite joint optimality. It does not establish that this mechanism explains the empirical gains in the evaluated language models, nor that it accounts for all instances of CEL.

\section{Conclusion and Future Work}
In this work, we study Capability Enhancement Localization (CEL): whether removing a small set of attention heads can improve task-specific performance without further fine-tuning. Across three model families and four benchmarks, a simple two-stage search identifies sparse head subsets whose ablation improves held-out accuracy. Motivated by these observations, we establish a potential training-induced mechanism for CEL in a simplified softmax attention model. Gradient descent from small Gaussian initialization concentrates features in individual heads, and, with positive probability, removing the head carrying a conflicting feature improves one task while harming another. This can occur even at a global minimum of the joint training objective, showing that successful joint optimization is compatible with beneficial task-specific head ablation.

We note that this work only provides an initial empirical and theoretical investigation of CEL. On the empirical side, future work could develop more efficient search procedures that account for interactions among heads, including beneficial combinations that individual-head screening may overlook. On the theoretical side, an important direction is to characterize when training produces beneficial head ablations beyond the fixed feature encoding and diagonal parameterization studied here. In particular, allowing learned representations and interactions among features could clarify when conflicting contributions become sufficiently separated from useful ones for sparse ablation to improve task performance.
\newpage 
\subsection*{AI use statement}


\textbf{Experiment implementation.} For the codes of all experiments described in this work, the authors use OpenAI Codex (with models GPT-5.6 Sol, GPT-6 Astra) for implementations. 

\textbf{Theoretical results.} For the theoretical models, claims, and construction provided in this work, the authors use GPT-6.0 Astra for brainstorming ideas, theorem construction, and theorem proving, though the authors already had clue about the constructions at the first place. After that, the authors organize the statements, the proofs, and the elaboration in this paper with the help of LLMs. Finally, when incorporating the statements/proofs/elaborations into the main body and appendix, the authors rewrite the contents based on the authors' understanding. The authors verified the soundness and correctness of the theoretical results presented in this draft. 

\textbf{Paper writing.} For paper writing, the authors use GPT-5.6 Sol and Grammarly. For every paragraph in this paper, the authors use the following workflow: (1) the authors draft the paragraph without the use of LLMs, (2) the authors then use GPT-5.6 Sol to improve/correct the wording and grammars, (3) the authors again modify the paragraph (fixed by GPT-5.6 Sol) again to improve the flow and personalize the deliver, and (4) the authors finally use Grammarly to fix the typos and grammars once again. The authors provide the citations and references by hand in the first step using Google Scholar, and no LLM is used for generating citation.

\bibliography{references}
\bibliographystyle{iclr2027_conference}

\include{iclr2027_appendix}

\end{document}

%% file: math_commands.tex
\usepackage{amsmath,amsfonts,bm}

\def\eqref#1{equation~\ref{#1}}

\def\1{\bm{1}}

\DeclareMathAlphabet{\mathsfit}{\encodingdefault}{\sfdefault}{m}{sl}
\SetMathAlphabet{\mathsfit}{bold}{\encodingdefault}{\sfdefault}{bx}{n}



%% file: iclr2027_appendix.tex
\appendix





\section{Additional Information for Experiment Setting}

\begin{table}[H]
      \small
      \centering
      \caption{Model details. Here, $H$ denotes query heads per layer,
      and $N = LH$ is the total number of ablatable heads.
      In our experiments, for convenience, SmolLM3 is evaluated in non-thinking mode.}
      \label{tab:model_details}
      \begin{tabular}{lccc}
      \toprule
      \textsc{Model} & \textsc{Layers} $L$ & \textsc{Heads} $H$ & \textsc{Total Heads} $N$ \\
      \midrule
      LLaMA-3.2-3B & 28 & 24 & 672 \\
      Qwen2.5-3B  & 36 & 16 & 576 \\
      SmolLM3-3B  & 36 & 16 & 576 \\
      \bottomrule
      \end{tabular}
  \end{table}

\begin{table}[H]
  \small
  \centering
  \caption{Dataset sizes for discovery $D_\textup{dist}$ and held-out evaluation $D_\textup{test}$. }
  \label{tab:dataset_sizes}
  \begin{tabular}{lrr}
  \toprule
  \textsc{Dataset} & \textsc{Discovery} & \textsc{Held-out} \\
  \midrule
  Arithmetic & 2,000 & 1,000 \\
  HellaSwag  & 2,000 & 10,042 \\
  BoolQ      & 2,000 & 3,270 \\
  MBPP       & 374  & 500 \\
  \bottomrule
  \end{tabular}
\end{table}
 
\section{Additional Results and Omitted Proofs for Section~\ref{sec:theoretical-explanation}} \label{apx:theoretical-explanation} 
In this section, we will provide the detailed proof for the results presented in Section~\ref{sec:theoretical-explanation}.

\subsubsection{Omitted Proof for Section~\ref{sec:joint-optimality-closed-form-formulation}}
 First,  we will provide the detailed proof for Proposition~\ref{prop:simple-representation}, which provides simplification for the feature response, predictor representation, population risks, and training optimum in our setting.

\begin{customprop}{\ref{prop:simple-representation} (restated)}
    Consider the three-head attention model described as in Section~\ref{sec:attention-model-setting}, with the fixed coordinate encoding, paired memory bank, and diagonal query, key, value and output projection matrices. For each head $h \in \{1, 2, 3\}$ and feature $j \in \{0, 1, 2\}$, define the effective head-feature coefficient $c_{hj} = o_{hj} v_{hj}\tanh(q_{hj} k_{hj})$, and let $w_j = \sum_{h = 1}^3c_{hj}$ denote the total coefficient assigned to feature $j$ after summing over three heads. Then we have
    \begin{enumerate}[leftmargin=*]
        \item \textbf{Head-feature response:} For each input $x = (x_0, x_1, x_2)$, the contribution of head $h$ associated with feature $j$ is exactly $x_j c_{hj}e_j$.
        \item \textbf{Predictor representation}: The model prediction can be written as $f_\theta(x) = w^\top x$, where $w = (w_0, w_1, w_2)$ and $w_i$ is defined as above. In other words, the resulting predictor is linear in the input features. However, the effective coefficients $w_{j}$ are highly non-linear functions of the trainable weights.
        \item \textbf{Population risks:} Under the two data distributions defined as in Section~\ref{sec:attention-model-setting}, the population risks reduce to
        \[
            R_A(w) = w_0^2 + (w_1 - b)^2, \quad R_B(w) = (w_0 - a)^2 + (w_2 - c)^2.
        \]
        \item \textbf{Joint training objective and optimum:} As a result, the population objective $\cL(\theta) = \frac{1}{4}\left(R_A(w(\theta)) + R_B(w(\theta))\right)$ can be rewritten as
        \[
            \cL(\theta) = \frac{1}{2}\left(w_0 - \frac{a}{2}\right)^2 + \frac{1}{4}(w_1 - b)^2 + \frac{1}{4}(w_2 - c)^2 + \frac{a^2}{8}.
        \]
        This implies the optimal aggregate coefficient vector $w^\star = \beta = \left(\frac{a}{2}, b, c\right)$.
    \end{enumerate}
\end{customprop}
\proof 
    We will present the proof in the following steps. 

    \textbf{Step 1: Head-feature response:} For a fixed head $h$ and a feature $j$, we recall that the query associated with feature $j$ is $z_j(x) = \sqrt{3}x_je_j$, while the two corresponding memory tokens are $m_{j, +} = e_j, m_{j, -} = -e_j$. From assumptions, the key and query matrices are diagonal, we have $Q_h z_j(x) = \sqrt{3} x_j q_{hj}e_j$, $K_h m_{j, +} = k_{hj}e_j$, and $K_{h}m_{j, -} = -k_{hj}e_j$. Therefore, after the $(1/\sqrt{3})$-scaling in the dot-product attention, the two logits are
    \[
        +x_j q_{hj}k_{hj}, \quad -x_j q_{hj}k_{hj}.
    \]
    This means that the two softmax weights are
    \[
        A_{h, j, +}(x) = \frac{e^{x_j q_{hj} k_{hj}}}{e^{x_jq_{hj}k_{hj}} + e^{-x_jq_{hj}k_{hj}}}, \quad A_{h, j, -}(x) = \frac{e^{-x_j q_{hj} k_{hj}}}{e^{x_jq_{hj}k_{hj}} + e^{-x_jq_{hj}k_{hj}}}.
    \]
    The corresponding value vectors are $V_{h}m_{j, +} = v_{hj}e_j$, $V_h m_{j, -} = -v_{hj}e_j$, and the softmax weighted combination is
    \[
        v_{hj} \frac{e^{x_jq_{hj}k_{hj}} - e^{-x_jq_{hj}k_{hj}}}{e^{x_jq_{hj}k_{hj}} + e^{-x_jq_{hj}k_{hj}}} = v_{hj} \tanh(x_j q_{hj} k_{hj})e_j. 
    \]
    After applying the output projection $O_h$, the contribution of the head $h$ for feature $j$ is $o_{hj}v_{hj}\tanh(x_j q_{hj}k_{hj})e_j$. Note that $x_j \in \{-1, 0, 1\}$, and tanh is an odd function, we have 
    \[
        g_{h, j}(x) = x_j c_{hj} e_j, \quad c_{hj} = o_{hj}v_{hj}\tanh(q_{hj}k_{hj}),
    \]
    as expected. 

    \textbf{Step 2: Predictor representation:} We now sum over all heads to have $ w_j = \sum_{h = 1}^3 c_{hj}.$
    After summing over all feature queries and all heads, and applying the scalar readout, the model prediction becomes $f_\theta(x) = w^\top x$ as expected. Here, we can see that the model is linear in the input $x$. However, the coefficient $w_j$ are not directly trained parameters, but complicated functions of parameters
    \[
        w_j = \sum_{h = 1}^3 c_{hj}, \quad c_{hj} = o_{hj}v_{hj}\tanh(q_{hj}k_{hj}).
    \]
    This is an important observation for the later localization result. 

    \textbf{Step 3: task-specific population risks and joint training objective:} Using the definition of task $A$ and $B$, the closed form of the predictor $f_\theta(x)$, and the fact that the feature is Rademacher variables, we can directly calculate $R_A(w) = w_0^2 + (w_1 - b)^2$ and $R_B(w) = (w_0 -a)^2 + (w_2 - c)^2$. Therefore, the joint training objective is
    \[
        \cL(\theta) = \frac{1}{2}\left(w_0 - \frac{a}{2}\right)^2 + \frac{1}{4}(w_1 - b)^2 + \frac{1}{4}(w_2 - c)^2 + \frac{a^2}{8},
    \]
    and trivially the optimal aggregated coefficient $w^\star = \beta = \left(\frac{a}{2},b,c\right)$.
\qed

\subsection{Additional Results and Omitted Proofs for Section~\ref{sec:gd-localize-feature}}
In this section, we will present the formal statements of supporting results and Theorem~\ref{thm:training-induce-localization}. The first result essentially says that, under small initialization and \textit{gradient flow} trajectory (i.e., infinitesimal step-size) and for a given feature, there will be a \textit{corresponding leader} head of which the weights will move away from initialization in a certain amount of time. Moreover, the earliest escaping time will be \textit{unique}, i.e., there will not be two heads being the leaders for a single feature at the same time. 

\begin{lemma}[Early competition among heads under small initialization] \label{lm:early-competition-among-heads}
    Fix a feature $j \in \{0, 1, 2\}$. For each head $h \in \{1, 2, 3\}$, the four trainable parameters associated with this feature are $\theta_{hj} = (q_{hj}, k_{hj}, v_{hj}, o_{hj}) \in \bbR^4$. Recall that they are initialized as
    \[
        \theta^0_{hj} = \alpha G_{hj}, \quad G_{hj} \overset{\mathrm{iid}}{\sim} N(0, I_4),
    \]
    where $\alpha > 0$ is the small initialization scale. To describe their leading-order behavior near the (small) initialization, for a generic initial vector $g = (g_q, g_k, g_v, g_o) \in \bbR^4$, consider the auxiliary ODE
    \begin{equation} \label{eq:aux_ode}
        \begin{cases}
            \dot{u}_q = u_ku_vu_o, \\
            \dot{q}_k = u_qu_vu_o, \\
            \dot{u}_v = u_qu_ku_o, \\
            \dot{u}_o = u_qu_ku_v,
        \end{cases} \quad u(0) = g,
    \end{equation}
    where $u(t) = (u_q(t), u_k(t), u_v(t), u_o(t))$ describe the leading-order evolution of the normalized quartet $\theta_{hj} / \alpha$. Assume that the coordinate $g$ are nonzero and that one coordinate has strictly smaller magnitude than the other three. Then: 
    \begin{enumerate}[leftmargin=*]
        \item \textbf{Finite auxiliary escape time}: The ODE in Eq.~\ref{eq:aux_ode} has a unique maximal solution $u_g:[0, T(g)) \rightarrow \bbR^4$, for some finite $T(g)$, and $\|u_g(t)\| \rightarrow \infty$ as $t \rightarrow T(g)^-$. We call $T(g)$ the \textbf{auxiliary escape time} of the quartet.
        \item \textbf{Sign alignment and balancing}: the product $u_q(t) u_k(t) u_v(t) u_o(t)$  eventually becomes positive, and as $t \rightarrow T(g)^-$, the weights are balanced
        \[
            \frac{u_r(t)^2}{u_s(t)^2} \rightarrow 1, \quad \forall r, s \in \{q, k, v, o\}.
        \]
        \item \textbf{Unique leader for each feature}: For three normalized Gaussian initialization $G_{1j}, G_{2j}, G_{3j} \overset{\mathrm{iid}}{\sim} N(0, I_4)$, the escape times $T(G_{1j}), T(G_{2j}), T(G_{3j})$ are almost surely finite and distinct. Therefore $\hat{h}_j \triangleq \arg \min_{h \in \{1, 2, 3\}} T(G_{hj})$ is almost surely uniquely defined. Moreover, $\Pr(\hat{h}_j = 1) = \Pr(\hat{h}_j = 2) = \Pr(\hat{h}_j = 3) = \frac{1}{3}$. The variable $\hat{h}_0, \hat{h}_1, \hat{h}_2$ are independent across features
    \end{enumerate}
\end{lemma}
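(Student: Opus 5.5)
The proof rests on two structural facts about the auxiliary ODE \eqref{eq:aux_ode}: a family of conserved quantities, and a cubic homogeneity. Write $P(t)=u_qu_ku_vu_o$ for the product of the quartet.

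\textbf{Conservation laws.} For each $r\in\{q,k,v,o\}$ we have $\tfrac{d}{dt}u_r^2 = 2u_r\dot u_r = 2P$. So all four squared coordinates grow at the same rate, and the differences $u_r^2-u_s^2$ are conserved. Let $m$ be the index of the strictly smallest initial magnitude, and set $s(t)=u_m(t)^2$. Then
\[
u_r(t)^2=s(t)+d_r,\qquad d_r=g_r^2-g_m^2>0\quad (r\neq m),
\]
and the whole system collapses to a one-dimensional one:
\[
\dot s = 2P,\qquad P^2 = s\prod_{r\neq m}(s+d_r).
\]
The product itself is monotone, since
\[
\dot P=\sum_r \dot u_r\,\frac{P}{u_r}=\sum_r\Bigl(\prod_{l\neq r}u_l\Bigr)^2\ \ge 0.
\]

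\textbf{Escape time and sign alignment (items 1 and 2).} Local existence and uniqueness are immediate because the vector field is polynomial. There are two cases.

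\emph{Case $P(0)>0$.} Then $s$ increases and satisfies $\dot s=2\sqrt{s\prod_{r\neq m}(s+d_r)}\sim 2s^2$. This blows up in the finite time
\[
T(g)=\int_{s_0}^\infty \frac{ds}{2\sqrt{s\prod_{r\neq m}(s+d_r)}}<\infty .
\]

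\emph{Case $P(0)<0$.} Now $s$ decreases. The integral $\int_0^{s_0}$ of the same integrand converges, because near $s=0$ the integrand behaves like $s^{-1/2}$. Here the strict inequality $d_r>0$ is used: with a tie, the integrand would behave like $s^{-1}$ and $s$ would never reach $0$. Hence $u_m$ hits $0$ in finite time. At that instant the other coordinates are nonzero, so
\[
\dot P=\Bigl(\prod_{r\neq m}u_r\Bigr)^2>0,
\]
and $P$ becomes strictly positive. We are then back in the first case, giving
\[
T(g)=\int_0^{s_0}\frac{ds}{2\sqrt{\cdots}}+\int_0^{\infty}\frac{ds}{2\sqrt{\cdots}}<\infty .
\]
In both cases $P$ is eventually positive. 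Since $s\to\infty$, we get $u_r^2/u_s^2=(s+d_r)/(s+d_s)\to 1$ for all $r,s$.

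\textbf{Unique leader (item 3).} The plan uses homogeneity. If $u$ solves the ODE from $g$, then $\lambda u(\lambda^2 t)$ solves it from $\lambda g$, so
\[
T(\lambda g)=\lambda^{-2}T(g).
\]
Let $N$ be the null set where some coordinate of $g$ vanishes or the two smallest magnitudes tie. This set is invariant under scaling, so on each ray $\{\lambda g:\lambda>0\}$ the map $\lambda\mapsto T(\lambda g)$ is strictly decreasing. Therefore each level set $\{T=\tau\}$ meets every ray at most once. In polar coordinates this means the level set has Lebesgue measure zero, so the law of $T(G)$ is atomless.

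Conditioning on $G_{2j}$, we obtain $\Pr\bigl(T(G_{1j})=T(G_{2j})\bigr)=\mathbb E\bigl[\Pr(T(G_{1j})=\tau)\big|_{\tau=T(G_{2j})}\bigr]=0$, and similarly for the other pairs. So the three escape times are almost surely finite and distinct, and $\hat h_j$ is well defined.

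Exchangeability of the i.i.d.\ blocks $G_{1j},G_{2j},G_{3j}$ gives $\Pr(\hat h_j=h)=1/3$ for each $h$. Finally, $\hat h_j$ is a measurable function of $(G_{1j},G_{2j},G_{3j})$ alone, and these blocks are disjoint and independent across $j$, so $\hat h_0,\hat h_1,\hat h_2$ are independent.

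\textbf{Main obstacle.} The delicate step is the $P(0)<0$ branch: showing that the sign flip of the smallest coordinate happens in finite time and that $P$ does not stall at zero. This is exactly where the strict-minimum hypothesis is needed. I would handle it through the explicit integral above, together with the positivity of $\dot P$ at the crossing. Measurability of $T$ follows from the explicit integral formula.
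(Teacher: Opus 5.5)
Your proposal is correct and follows essentially the same route as the paper: conservation of $u_r^2-u_s^2$, reduction to a scalar separable equation (your $s=u_m^2$ is the paper's common shift $\xi$ offset by $g_m^2$), two sign cases giving the escape time as one convergent integral or that integral plus twice the descent time, balancing from the divergence of $s$, and the scaling law $T(\lambda g)=\lambda^{-2}T(g)$ with a polar-coordinate argument for atomlessness, followed by exchangeability and independence across feature blocks. Your extra identity $\dot P=\sum_r\bigl(\prod_{l\neq r}u_l\bigr)^2\ge 0$ makes it immediate that $P$ never returns to zero after the sign flip, but it does not change the argument.
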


\proof 
    We will organize the proof using the following steps.

    \textbf{Step 1: why the four-dimensional system is solvable.} Fix one normalized initialization $g = (g_q, g_k, g_v, g_o)$, and let $u(t) = (u_q(t), u_k(t), u_v(t), u_o(t))$ be the solution of Eq.~\ref{eq:aux_ode}. For any coordinate $r \in \{q, k, v, o\}$, we have $\frac{d}{dt} u_r(t)^2 = 2u_r(t)\dot{u}_r(t)$. Note that $\frac{d}{d_t}u_q^2 = 2u_q(u_ku_vu_o) = 2u_qu_ku_vu_o$, and the same identity holds for every coordinate:
    \begin{equation*} \label{eq:derivative-similiarity}
        \frac{d}{d_t}u_q^2 = \frac{d}{d_t}u_k^2 = \frac{d}{d_t}u_v^2 = \frac{d}{d_t}u_o^2 =  2u_qu_ku_vu_o.
    \end{equation*}
    Therefore, the differences between squared coordinates are constant:
    \[
        u_r(t)^2 - u_s(t)^2 = g_r^2 - g_s^2.
    \]
    Define $a_r \triangleq g_r^2$, then there exists a single scalar function $\xi(t)$ with $\xi(0) = 0$ such that 
    \begin{equation} \label{eq:ode-solution}
        u_r(t)^2 = a_r + \xi(t), \quad \forall r \in \{q, k, v, o\}.
    \end{equation}
    \textbf{Step 2: dynamic of the common scalar $\xi(t)$.} Let $P(t) \triangleq u_q(t)u_k(t)u_v(t)u_o(t)$ be the coordinate product. By Step 1 and Eq.~\ref{eq:ode-solution}, we have $\dot{\xi}(t) = 2P(t)$, and from Eq.~\ref{eq:ode-solution}, we have $P(t)^2 = \prod_{r \in \{q, k, v, o\}}(a_r + \xi(t))$. Therefore
    \begin{equation}
        \dot{\xi} = \begin{cases}
            2 \sqrt{\prod_{r} (a_r + \xi)}, \quad P(t) > 0,\\
            -2 \sqrt{\prod_{r}(a_r + \xi)}, \quad P(t) < 0.
        \end{cases}
    \end{equation}
    This means that the four-dimensional ODE has reduced to a one-dimensional separable equation as above. 
    
    \textbf{Step 3: Positive initial products.} Let $P_0 \triangleq g_qg_kg_vg_0$. First, supposed that $P_0 > 0$, then initially we have $\dot{\xi} > 0$. Since all $u_r^2 = a_r + \xi$ are increasing, no coordinate can cross zero, which means $P(t)$ says positive.  Therefore, $\xi(t)$ increases monotonically from $0$ toward $+ \infty$, and separation of variables gives
    \[
        t = \int_{0}^{\xi(t)} \frac{ds}{2\sqrt{\prod_r(a_r + s)}}.
    \]
    Define $I_+(g) \triangleq \int_{0}^\infty \frac{ds}{2\sqrt{\prod_{s}(a_r + s)}}$, and for large $s$, note that $\sqrt{\prod_{r}(a_r + s)} \approx s^2$, so the integrand behaves as $\frac{1}{2s^2}$. Therefore $I_+(g) < \infty$, and $T(g) = I_+(g)$ when $P_0 > 0$. Moreover, we have $\xi(t) \rightarrow \infty$ as $t \rightarrow T(g)^-$, and therefore
    \[
        u_r(t)^2 = a_r + \xi(t) \rightarrow \infty.
    \]
    This means that $\|u_g(t)\| \rightarrow \infty$.

    \textbf{Step 4: Negative initial products.} Now assume that $P_0 < 0$. Then initially, $\dot{\xi} < 0$, so $\xi$ decreases. Let $m \triangleq \min_r a_r$. By assumption, this minimum is achieved by a unique coordinate, say $r_\star$. From $u_{r_\star}(t)^2 = m + \xi(t)$, that coordinate reaches 0 when $\xi = -m$. All other coordinate remain nonzero because their corresponding $a_r$'s are strictly larger than $m$. At the point where $u_{r_\star} = 0$, we have 
    \[
        \dot{u}_{r_\star} = \prod_{s \neq r_\star} u_s(t) \neq 0,
    \]
    which means the coordinate does not remain at zero but crossing through zero and changes sign. This change flips the sign of $P(t) = u_qu_ku_vu_o$ from negative to positive. Therefore, after the cross, $\dot \xi > 0$, and the scalar trajectory follows $0 \rightarrow -m \rightarrow 0 \rightarrow +\infty$. We define
    \[
        I_-(g) \triangleq \int_{-m}^0 \frac{ds}{2\sqrt{\prod_r (a_r + s)}} > 0. 
    \]
    Near $s = -m$, exactly one factor in the denominator vanishes linearly, so the integrand behave like $(s + m)^{-1/2}$, which is integrable, and therefore $0 < I_-(g) < \infty$. Now note that the trip from $0$ to $-m$ takes time $I_-$, the trip from $-m$ to $0$ take time $I-$, and the trip from $0$ to $\infty$ take time $I_+$ as shown above. Therefore, $T(g) = I_+(g) + 2I_-(g)$ when $P_0 < 0$. Hence, we conclude that in both case, $T(g) < \infty$.

    \textbf{Step 5: Why the four factor magnitudes become balanced.} From the above, we know that $u_r(t)^2 = a_r + \xi(t)$, and therefore
    \[
        \frac{u_r(t)^2}{u_s(t)^2} = \frac{a_r + \xi(t)}{a_s + \xi(t)}. 
    \]
    As $t \rightarrow T(g)^-$, we have $\xi(t) \rightarrow \infty$, and therefore $\frac{u_r(t)^2}{u_s(t)^2} \rightarrow 1$.

    \textbf{Step 6: Scaling the initialization.} We have this useful observation: if the initial vector is multiplied by $r > 0$,t hen $a_i \rightarrow r^2 a_i$. Changing variables $s = r^2 z$ in the integrals above gives $T(rg) = r^{-2}T(g)$. This is also consistent with the actual initialization $\theta^0_{hj} = \alpha G_{hj}$. If we applied the auxiliary ODE directly to the initialization $\alpha G_{hj}$, its escape time would be $T(\alpha G_{hj}) = \alpha^{-2}T(G_{hj})$. But the common factor $\alpha^{-2}$ is identical for all three heads. Therefore
    \[
        \arg \min_{h} T(\alpha G_{hj}) = \arg \min_{h} T(G_{hj}).
    \]
    This is why we can skip the normalized factor $\alpha$ and compare the normalized escape times $T(G_{hj})$.

     \textbf{Step 7: Gaussian escape times have no atoms.} Let $G \sim N(0, I_4)$, and write $G = R\Omega$, where $R = \|G\| = 0$, and $\Omega = \frac{G}{\|G\|}$. For an isotropic Gaussian vector, $R$ and $\Omega$are independent, and $R$ has a continuous density. From Step 6, we have $T(G) = T(R\Omega) =  \frac{T(\Omega)}{R^2}$. With probability one, all coordinates of \(\Omega\) are nonzero and its smallest absolute coordinate is unique. By the preceding steps, \(T(\Omega)\in(0,\infty)\) for every such direction. Conditional on any such \(\Omega\), \(T(\Omega)\) is fixed, and the mapping \(R\mapsto T(\Omega)/R^2\) is strictly decreasing and continuous on \((0,\infty)\). Therefore, for every fixed $t > 0$, we have
     \[
        \Pr(T(G) = t \mid \Omega) = 0.
     \]
     Averaging over $\Omega$, we have $\Pr(T(G) = t) = 0$. 

     \textbf{Step 8: Unique and uniformly early leader.} For a fixed feature $j$, we have
     \[
        G_{1j}, G_{2j}, G_{3j} \overset{\mathrm{iid}}{\sim} N(0, I_4).
     \]
     Therefore, $T(G_{1j}), T(G_{2j}), T(G_{3j})$ are i.i.d. atom-less random variables. Therefore, the ties occur with probability zero, and therefore
     \[
        \hat{h}_j = \arg \min_{h} T(G_{hj})
     \]
     is almost surely unique. Because the three head are exchangeable, we have
     \[
        \Pr(\hat{h}_j = h) = \frac{1}{3}, \quad h = 1, 2, 3.
     \]
     Finally, the Gaussian initialization blocks for different feature are independent, therefore $\hat{h}_0, \hat{h}_1, \hat{h}_2$ are independent, which concludes the proof.

\qed
\begin{remark}[Interpretation of Lemma~\ref{lm:early-competition-among-heads}] 
    For each feature $j$, three heads begin from independent small random quartets. Their normalized leading-order dynamics have different escape times, so almost surely one head starts growing before the others. Lemma~\ref{lm:early-competition-among-heads} therefore identifies this early leader. However, it: (1) does NOT yet say that this head permanently wins the feature and (2) it establishes continuous time analysis. Both aspects will be discussed below.
\end{remark}

Lemma~\ref{lm:early-competition-among-heads} establishes the leading head phenomenon for each features but under gradient flow assumption. In the following result, we connect these claims with the actually discrete GD iterates. Essentially, it shows that GD can track auxiliary ODE before escape times of the selected heads, provided the residual remains positive and unbounded. 

\begin{lemma}[Discrete GD follows the auxiliary dynamics before escape] \label{lm:flow-to-discrete-competition-among-heads}
    Fixed one feature and consider three heads $h \in \{1, 2, 3\}$. For each head $h$, let 
    \[
        \theta_h \triangleq (q_h, k_h, v_h, o_h) \in \bbR^4, c(\theta_h) \triangleq o_hv_h\tanh(q_hk_h), 
    \]
    and define the total coefficient $w \triangleq \sum_{h = 1}^3 c(\theta_h)$. Let $\beta > 0$ be the target coefficient and $p > 0$ its loss weight. Consider the square loss $\cL(\theta) = \frac{p}{2} (w - \beta)^2$ and full-batch GD 
    \[
        \theta_h^{n + 1} = \theta_h^n -\eta  \nabla_{\theta_h}\cL(\theta^n).
    \]
    Equivalently, writing $r_n \triangleq \beta - w^n$, and the update is
    \[
        \theta_h^{n + 1} = \theta_h^n + npr_n \nabla c(\theta^n_h). 
    \]
    Initialize $\theta^0_h = \alpha g_h$, for $g_h \in \bbR^4$ and define the normalized parameters $u^n_h \triangleq \frac{\theta^n_h}{\alpha}$. Also define $F_\alpha(u) \triangleq \alpha^{-3} \nabla c(\alpha u)$, and the limiting four-factor vector field 
    \[
        F(u) = \begin{bmatrix}
            u_ku_vu_o\\
            u_qu_vu_o\\
            u_qu_ku_o\\
            u_qu_ku_v
        \end{bmatrix}.
    \]
    Assume that, over the GD iterates under consideration, $0 < r_n \leq \overline{r}$, for some fixed finite $\overline{r}$. Define the residual-weighted auxiliary step and cumulative time by $\delta_n \triangleq \alpha^2 \eta p r_n$, $\zeta_n = \sum_{k < n} \delta_k$. Let $\mathcal{I} \subseteq  \{1,2 ,3 \}$ be any nonempty set of heads and fix $S < \min_{h \in \cI} T(g_h)$. Here, $\cI$ specifies only the heads whose trajectories we approximate, and the residual $r^n$ and the GD updates still involve all three heads, then
    \begin{enumerate}[leftmargin=*]
        \item \textbf{The exact softmax field approaches the four-factor field}: On every compact set $K \subset \bbR^4$, we have $\|F_\alpha - F\|_{C^1(K)} \leq  C_K\alpha^4$ for some constant $C_K$, and 
        \[
            \|H\|_{C^1(K)} \triangleq \sup_{u \in K}\|H(u)\| + \sup_{u \in K}\|DH(u)\|_{\textup{op}}.
        \]
        \item \textbf{Normalized GD follows the auxiliary ODE before escape}: there exists $C_S < \infty$ such that, for all sufficiently small $\alpha$, 
        \[
            \max_{h \in \cI} \|u_h^n - u_h(\zeta_n)\| \leq C_S\alpha^2,
        \]
        for every iterate satisfying $\zeta_n \leq S$. More precisely, the error is $\cO(\alpha^4 + h_\alpha)$, where $h_\alpha \triangleq \alpha^2 \eta p \overline{r}$, and hence is $\cO(\alpha^2)$ for fixed $\eta, p, \overline{r}$. The same error bound also holds at the first crossing index $n_\star = \min \{{n: \zeta_n \ge S \}}$, if it exists, for which $S \leq \zeta_{n_\star} \leq S + h_\alpha$. The step sizes $\delta_n$ may depend on all current model parameters, that is, the approximation requires only their positivity and the bound $\delta_n \leq h_\alpha$, not independence from the iterates.  
    \end{enumerate}
\end{lemma}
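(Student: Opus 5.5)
Part 1 reduces to a $C^1$ Taylor comparison of the exact gradient of $c(\theta)=o\,v\tanh(qk)$ with the four-factor field. Part 2 follows from a standard discrete Grönwall argument for a non-uniform Euler scheme with small, positive, iterate-dependent steps.

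\textbf{Part 1.} Writing $\theta=\alpha u$, we have
\[
c(\alpha u)=\alpha^2 u_o u_v \tanh(\alpha^2 u_q u_k).
\]
Differentiating,
\[
\nabla c(\alpha u)=\alpha^3\bigl(u_k u_o u_v\,\mathrm{sech}^2(\alpha^2 u_qu_k),\;u_q u_o u_v\,\mathrm{sech}^2(\alpha^2 u_qu_k),\;u_o\,\alpha^{-2}\tanh(\alpha^2u_qu_k),\;u_v\,\alpha^{-2}\tanh(\alpha^2u_qu_k)\bigr).
\]
The factor $\alpha^3$ arises from the chain rule $\nabla_\theta=\alpha^{-1}\nabla_u$ applied to $\alpha^4(\cdot)$. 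Hence $F_\alpha(u)$ equals $F(u)$ with two kinds of modification: the factor $\mathrm{sech}^2(\alpha^2 u_qu_k)$ is inserted, and $u_qu_k$ is replaced by $\alpha^{-2}\tanh(\alpha^2 u_qu_k)$.

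Using $\tanh(x)=x+O(x^3)$ and $\mathrm{sech}^2(x)=1+O(x^2)$, both analytic, with $x=\alpha^2u_qu_k$, the correction $F_\alpha-F$ is $O(\alpha^4)$ uniformly on a compact $K$. Explicitly, $\mathrm{sech}^2(x)-1=O(x^2)=O(\alpha^4)$ and $\alpha^{-2}\tanh(\alpha^2 y)-y=O(\alpha^4y^3)$. The same expansions, differentiated once (the derivatives are again analytic with the same scaling), give the $C^1$ bound. This yields $\|F_\alpha-F\|_{C^1(K)}\le C_K\alpha^4$.

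\textbf{Part 2.} The normalized update is
\[
u_h^{n+1}=u_h^n+\alpha^{-1}\eta p r_n\nabla c(\alpha u_h^n)=u_h^n+\delta_n F_\alpha(u_h^n),
\]
with $\delta_n=\alpha^2\eta p r_n\in(0,h_\alpha]$. This is an explicit Euler scheme with variable steps $\delta_n$ on the clock $\zeta_n$.

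Fix $S<\min_{h\in\mathcal I}T(g_h)$. By Lemma~\ref{lm:early-competition-among-heads}, each $u_h$, $h\in\mathcal I$, is bounded on $[0,S+1]$. Let $K$ be the closed $1$-neighbourhood of the union of these trajectory images, and let $L$ and $M$ be the Lipschitz and sup bounds of $F$ on $K$.

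Define $e_n=u_h^n-u_h(\zeta_n)$. Then
\[
e_{n+1}=e_n+\delta_n\bigl[F_\alpha(u_h^n)-F(u_h(\zeta_n))\bigr]-\tau_n,
\]
where the local truncation error is
\[
\tau_n=\int_{\zeta_n}^{\zeta_{n+1}}\bigl(F(u_h(s))-F(u_h(\zeta_n))\bigr)\,ds,
\]
so $|\tau_n|\le LM\delta_n^2/2$. As long as $u_h^n\in K$, we get
\[
|e_{n+1}|\le(1+L\delta_n)|e_n|+\delta_n C_K\alpha^4+LM\delta_n h_\alpha .
\]
Iterating and using $\prod(1+L\delta_k)\le e^{L\zeta_n}\le e^{L(S+h_\alpha)}$ gives
\[
|e_n|\le e^{L(S+1)}(S+1)\bigl(C_K\alpha^4+LMh_\alpha\bigr)=O(\alpha^4+h_\alpha)=O(\alpha^2).
\]

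A bootstrap (induction on $n$) closes the argument. For $\alpha$ small the bound is less than $1$, so $u_h^n$ never leaves $K$. The bound also extends to the crossing index $n_\star$, since $\zeta_{n_\star}\le S+h_\alpha\le S+1$.

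Nothing in the argument uses independence of $\delta_n$ from the iterates; it uses only $0<\delta_n\le h_\alpha$. The remaining heads affect the estimate only through $r_n$, which is controlled by the assumption $0<r_n\le\bar r$.

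The main obstacle is the bookkeeping of the bootstrap, ensuring that the iterates stay in $K$ so that the $C^1$ bounds apply. It is handled by the induction above, with all constants fixed before $\alpha$ is chosen.
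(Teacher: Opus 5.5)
Your argument follows the paper's proof: a Taylor comparison of $F_\alpha$ with $F$ for part 1, then normalized GD as a variable-step Euler scheme on the clock $\zeta_n$, with a local truncation estimate, discrete Gr\"onwall, and a bootstrap that keeps the iterates in a compact neighbourhood of the exact trajectories. You also handle the iterate-dependence of $\delta_n$ the same way the paper does.

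One step fails as written. You claim each $u_h$, $h\in\mathcal{I}$, is bounded on $[0,S+1]$. But $S<\min_{h\in\mathcal{I}}T(g_h)$ does not imply $S+1<T(g_h)$, so the exact solution may escape before time $S+1$. In that case your $K$, $L$, $M$ and the Gr\"onwall constant $e^{L(S+1)}(S+1)$ are not defined.

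The repair is the one the paper uses. Choose $\rho>0$ with $S+\rho<\min_{h\in\mathcal{I}}T(g_h)$. Build $K$ from the trajectory images on $[0,S+\rho]$, replace $S+1$ by $S+\rho$ in the constant, and take $\alpha$ small enough that $h_\alpha\le\rho$. That last condition is what the argument actually needs in two places: the truncation bound on the last step before $n_\star$ uses $u_h(s)$ up to $s=\zeta_{n+1}\le S+h_\alpha$, and the crossing index satisfies $\zeta_{n_\star}\le S+h_\alpha$. With this change your proof goes through.
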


\proof 
    We again divide the proof into the following steps. 

    \textbf{Step 1: The exact softmax field approaches the four-factor field:} From $c(q, k, v, o) = ov \tanh (qk)$, we have
    \[\nabla c(q,k,v,o) =
        \begin{pmatrix}
            ovk\,\operatorname{sech}^2(qk)\\
            ovq\,\operatorname{sech}^2(qk)\\
            o\tanh(qk)\\
            v\tanh(qk)
        \end{pmatrix}.
    \]
    Substituting $\theta = \alpha u$, we have $(F_\alpha(u))_q = u_ku_vu_o \operatorname{sech}^2(\alpha^2 u_qu_k)$, and similarly $(F_\alpha(u))_v = u_o \frac{\tanh(\alpha^2u_qu_k)}{\alpha^2}.$ Using Taylor's expansion, for small $z$, we have
    \[ 
        \textup{sech}^2(z) = 1 + \cO(z^2), \quad \tanh z = z + \cO(z^3).    
    \]
    This implies that $\|F_\alpha - F\|_{C^1(K)} \leq C_K \alpha^4$. 

    \textbf{Step 2: Normalized GD follows the auxiliary ODE before escape:} now divide the GD update 
    \[
        \theta_h^{n + 1} = \theta_h^n + npr_n \nabla c(\theta^n_h)
    \]
    by $\alpha$, and since $\nabla c(\alpha u) = \alpha^3 F_\alpha(u)$ by definition, we have
    \[
        u_h^{n + 1} = u_h^n + \alpha^2 \eta p r_n F_\alpha(u^n_h).
    \]
    Here, notice that $ \alpha^2 \eta p r_n = \delta_n$. Therefore, normalized GD is a variable-step Euler discretization of the auxiliary ODE. Since $0 < r_n \leq \overline{r}$, the steps $\delta_n$ satisfies $0 < \delta_n \leq h_\alpha \triangleq \alpha^2 \eta p \overline{r} \rightarrow 0$ for $\alpha \rightarrow 0$. Fix $S < \min_{h \in \cI}T(g_h)$. Since $S$ lies strictly before every escape time, all exact ODE trajectories remain in a common compact set on a slightly larger interval $[0, S + \rho]$. On this set $F$ is Lipschitz with some constant $L$. For one ODE steps, we have
    \[
        u_h(\zeta_{n + 1}) = u_h(\zeta_n) + \delta_n F(u_h(\zeta_n)) + \cO(\delta_n^2). 
    \]
    We then define
    \[
        e_n \triangleq \max_{h \in \cI} \|u_h^n - u_h(\zeta_n)\|.
    \]
    Choose $\rho > 0$ such that $S + \rho < \min_{h \in \cI} T(g_h)$. The selected exact trajectories on $[0, S + \rho]$ lie in the interior of a compact cube $K$, with a positive distance from its boundary. Take $\alpha $ sufficiently small that $h_\alpha < \rho$.

    As long as the numerical iterates lie in $K$, the preceding one-step estimate applies, including to a candidate step that would leave $K$. Since $e_0 = 0$, applying discrete Gronwall inequality gives 
    \[
        e_n \leq C \exp \left(C \sum_{k < n} \delta_k\right) \left(\alpha^4 \sum_{k < n}\delta_k + \sum_{k < n} \delta_k^2\right).
    \]
    For iterates before the threshold and for the first crossing iterate, we have
    \[
        \sum_{k < n}\delta_k = \zeta_n \leq S + h_\alpha, \quad \sum_{k < n} \delta_k^2 \leq h_\alpha \sum_{k < n} \delta_k.
    \]
    Consequently, $e_n \leq C_S(\alpha^4 + h_\alpha)$. For sufficiently small $\alpha$, this bound is smaller than the distance of the exact trajectories from the boundary $K$. Thus a first numerical exit from $K$ is impossible, which implies the estimate throughout the claimed interval, including the first crossing step. If the clock never reaches $S$, the same argument applies to every finite prefix. 

    This argument uses only the realized bounds on the step sizes and their cumulative sum, and it does not require the mesh to be independent of the iterates. Finally, $h_\alpha = \alpha^2 \eta p \overline{r} = \cO(\alpha^2)$ for fixed $\eta, p$, and $\overline{r}$. Therefore $e_n \leq C'_S \alpha^2$ as desired. 
\qed

\begin{remark}
    Lemma~\ref{lm:flow-to-discrete-competition-among-heads} essentially says that as long as no auxiliary trajectory has reached its escape time, the normalized discrete GD trajectory stays close to the continuous four-factor trajectory from Lemma~\ref{lm:early-competition-among-heads}. Therefore, the early head competition predicted by that auxiliary ODE genuinely describes the early stage of the actual GD dynamics. 
\end{remark}

The following lemma shows that, under suitable parameter and sufficiently small step-size, one GD steps preserves the positivity of the residual and does not increase it. In other words, the aggregated coefficient moves toward its target without overshooting. We will combine this result with the parameter bounds provided below to maintain these residual conditions throughout training. 

\begin{lemma}[No residual overshoot] \label{lm:no-residual-overshoot}
    Fixed one feature and write $r = \beta - w$, $s = \eta pr$, and $\theta^+_h = \theta_h + s \nabla c(\theta_h)$, where $c(q, k, v, o) = ov \tanh(qk)$, and $0 < p \leq 1$. Suppose that the weights of every head $\theta_h$ satisfies $\theta_h \in [-B, B]^4$, and that $0 < r \leq D + 1$, where $D = \max(1, t, b, c)$, $R = 12(D + 2)$, $B = 1 + \sqrt{R}$, and $t, b, c$ are defined as in Section~\ref{sec:attention-model-setting}. If $0 < \eta \leq \eta_\star$, where 
    \[
        \eta_\star = \min \left\{\frac{1}{2(D + 1)B^2}, \frac{1}{(D + 1)L}, \frac{1}{36B^6}\right\}, \quad L = 4(2B)^4 + 3(2B)^2 + 1,
    \]
    then every update segment lies in $[-2B, 2B]^4$, and for each head $h$
    \[
        \frac{s}{2}\|\nabla c(\theta_h)\|_2^2 \leq c(\theta^+_h) - c(\theta_h) \leq  \frac{3s}{2} \|\nabla c(\theta_h)\|_2^2.
    \]
    Consequently, $0 \leq w^+ - w \leq \frac{r}{2}$, $\frac{r}{2} \leq r^+ \leq r$, where $w^+ \triangleq \sum_{h = 1}^3 c(\theta^+_h)$ is the aggregate coefficient after one GD update, and $r^+ \triangleq \beta - w^+$. In other words, a positive residual remains positive after one GD step. 
\end{lemma}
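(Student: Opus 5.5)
The plan is a second-order Taylor (descent-lemma) estimate of $c$ along the update segment, followed by summation over heads. First we check that the segment stays in the box. Each component of $\nabla c$ on $[-B,B]^4$ is bounded by $B^3$ (using $|\tanh|\le 1 \le B$ and $\operatorname{sech}^2\le 1$), so $\|\nabla c(\theta_h)\|_\infty \le B^3$. The step size is $s=\eta p r \le \eta (D+1)$, so the displacement is at most $\eta(D+1)B^3$. By the first constraint in $\eta_\star$, this is at most $B/2$. Hence every point $\theta_h+\tau s\nabla c(\theta_h)$, $\tau\in[0,1]$, lies in $[-2B,2B]^4$.

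Next we bound the Hessian of $c$ on $[-2B,2B]^4$. Its entries involve terms such as $ov\,k^2\,\partial^2_z\tanh$, $ov\operatorname{sech}^2 + ovqk(\cdot)$, $k\operatorname{sech}^2$ and $\tanh$. With $|\partial_z^2\tanh|\le 1$ and $|\operatorname{sech}^2|\le 1$, each entry is at most a combination of $(2B)^4$, $(2B)^2$ and $1$. This should give $\|\nabla^2 c\|_{\mathrm{op}}\le L$ with the stated $L=4(2B)^4+3(2B)^2+1$, possibly after a Frobenius-type count. Taylor's theorem with integral remainder then gives
\[
\bigl|c(\theta_h^+)-c(\theta_h)-s\|\nabla c(\theta_h)\|_2^2\bigr|\le \tfrac{L}{2}s^2\|\nabla c(\theta_h)\|_2^2 .
\]
Since $sL\le \eta(D+1)L\le 1$ by the second constraint, the remainder is at most $\tfrac{s}{2}\|\nabla c\|_2^2$. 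This yields the two-sided per-head bound.

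Summing over the three heads gives $w^+-w\ge 0$ immediately. For the upper bound we have $w^+-w\le \tfrac{3s}{2}\sum_h\|\nabla c(\theta_h)\|_2^2 \le \tfrac{3s}{2}\cdot 3\cdot 4B^6 = 18\,\eta p r B^6$. The third constraint $\eta\le 1/(36B^6)$ and $p\le 1$ then give $w^+-w\le r/2$. Finally, $r^+=r-(w^+-w)\in[r/2,r]$, so positivity of the residual is preserved.

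The main obstacle is the Hessian bound. The constant $L$ must genuinely dominate the operator norm, and this requires a careful enumeration of the mixed second derivatives of $ov\tanh(qk)$. I would compute the $4\times4$ Hessian explicitly and bound the operator norm by the maximal row sum. If the stated $L$ turns out to be slightly too small, I would enlarge it, since only the form of $\eta_\star$ changes.
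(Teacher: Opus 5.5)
Your proposal is correct and follows the paper's proof essentially step for step. The steps match: box containment from the first constraint on $\eta_\star$, then a second-order Taylor bound with $\|\nabla^2 c\|_{\mathrm{op}}\le L$ and $sL\le 1$ for the two-sided per-head estimate, then summation over heads using $\|\nabla c\|_2^2\le 4B^6$ and $\eta\le 1/(36B^6)$. Your hedge on the Hessian is unnecessary: using $|\tanh''|\le 1$, the maximal absolute row sum of the symmetric Hessian on $[-b_0,b_0]^4$ is at most $2b_0^4+3b_0^2$, which is below the stated $L$ at $b_0=2B$, so you do not need to enlarge $L$.
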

\proof
    On a box $[-b_0, b_0]^4$ with $b_0 \geq 1$, we have $\|\nabla c\|_2^2 \leq 4b_0^6$, and $\|\nabla^2 c\|_{op} \leq 4b_0^4 + 3b_0^2 + 1$. Since $s \leq \eta (D + 1)$, the first step-size condition in the definition of $\eta_\star$ implies $s\|\nabla c(\theta_h)\|_\infty \leq B$, so the whole update segment remains in $[-2B, 2B]^4$. On this box, we have $\|\nabla^2 c\|_{op} \leq L$. Using Taylor's expansion, we have
    \[
        c(\theta^+_h) - c(\theta_h) = s \|\nabla c(\theta_h)\|_2^2 + E_h,
    \]
    where the residual $|E_h| \leq \frac{s^2 L }{2}\|\nabla c(\theta_h)\|_2^2$. Since $sL \leq 1$, we have
    \[
        \frac{s}{2}\|\nabla c(\theta_h)\|_2^2 \leq c(\theta^+_h) - c(\theta_h) \leq \frac{3s}{2}\|\nabla c(\theta_h)\|_2^2.
    \]
    Summing over three heads, and note that $\|\nabla c(\theta_h)\|_2^2 \leq 4B^6$, we have
    \[
        0 \leq w^+ - w \leq 18sB^6.
    \]
    Since $s = \eta p r \leq \eta r$, and $\eta \leq (36 B^6)^{-1}$, we have $w^+ - w \leq \frac{r}{2}$. Finally, $r^+ = \beta - w^+ = r - (w^+ - w)$, which implies $\frac{r}{2} \leq r^+ \leq r$ as desired. 
\qed

We now establish an invariant region for the candidate winning head. 

\begin{lemma}[Bounded invariant region for the winning head] \label{lm:bounded-invariant-region-winning}
    Fix one feature and let $h^\star$ be the head with the smallest auxiliary escape time from Lemma~\ref{lm:early-competition-among-heads}. Consider a discrete GD iterate obtained after applying Lemma~\ref{lm:flow-to-discrete-competition-among-heads} at a time sufficiently close to that escape, so that the four parameters of $h^\star$, that is $\theta = (q, k, v, o)$, have positive product and are approximately balanced. After a fixed sign change that leave both $c(\theta)$ and the GD dynamics unchanged, we may assume $q, k, v, o > 0$. Define 
    \[
        Q = q^2 + k^2, V = v^2 + o^2, d = q^2 - k^2, e = v^2 - o^2.
    \]
    Suppose that at the current iterate
    \[
        |d| \leq \frac{Q}{3}, |e| \leq \frac{V}{3}, Q \leq 2V, Q \leq R, V \leq R,
    \]
    and whenever $Q \leq 1$, we have $V \leq 3Q$. Also, assume that Lemma~\ref{lm:no-residual-overshoot} applies to the next GD step, that the next total coefficient satisfies $w^+ < \beta \leq D$, and that the total next-step contribution of the two heads is at least $-1$. Then the updated parameters of $h^\star$ satisfy the same inequalities, that is
    \[
        q^{+2}, k^{+2}, v^{+2}, o^{+2} < R.
    \]
    Therefore, all four updated parameters remain in the box $[-B, B]$.
\end{lemma}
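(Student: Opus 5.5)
The plan is to write the one-step GD update of the winning head explicitly and track the four quantities $Q,V,d,e$ directly. Write $s=\eta p r$, $X=ov\,\mathrm{sech}^2(qk)$ and $Y=\tanh(qk)$. The update is then
$$q^+=q+sXk,\quad k^+=k+sXq,\quad v^+=v+sYo,\quad o^+=o+sYv.$$
Squaring and subtracting gives two exact identities:
$$d^+=d\,(1-s^2X^2),\qquad e^+=e\,(1-s^2Y^2).$$
Summing instead of subtracting gives
$$Q^+=Q(1+s^2X^2)+4sXqk,\qquad V^+=V(1+s^2Y^2)+4s\,ov\,Y.$$
By Lemma~\ref{lm:no-residual-overshoot} the iterates stay in $[-2B,2B]^4$. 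The step-size bound $\eta\le\eta_\star$ therefore gives $sX,\ sY\le 1$. The contraction factors thus lie in $[0,1]$, so $|d^+|\le|d|$ and $|e^+|\le|e|$. Since $Q^+\ge Q$ and $V^+\ge V$ (all terms are nonnegative because $q,k,v,o>0$), the balance conditions $|d^+|\le Q^+/3$ and $|e^+|\le V^+/3$ follow immediately. Positivity of $q^+,k^+,v^+,o^+$ is also preserved.

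Next I will handle the two ratio invariants by comparing the first-order increments $4sXqk$ and $4s\,ov\,Y$. With $z=qk\ge0$ these are $4s\,ov\,z\,\mathrm{sech}^2 z$ and $4s\,ov\tanh z$. Their ratio is
$$\frac{\tanh z}{z\,\mathrm{sech}^2 z}=\frac{\sinh 2z}{2z}\ \ge 1,$$
so $\Delta V\ge\Delta Q$ at first order. Together with $Q\le 2V$ this gives $Q^+\le 2V^+$. The second-order pieces $s^2X^2Q$ and $s^2Y^2V$ must be absorbed using $sX,sY\le1$ and the balance bounds; I expect a small step-size slack to suffice here.

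For the invariant ``$Q\le1\Rightarrow V\le3Q$'' I note that $z=qk\le Q/2\le 1/2$. Then $\sinh(2z)/(2z)\le\sinh 1<3$, so $\Delta V\le 3\Delta Q$ at first order, and again the quadratic terms are controlled by the step size. One case needs separate treatment: $Q\le 1<Q^+$. There the invariant no longer needs to hold, but the bound on $V^+$ must come from the argument below rather than from the ratio.

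Finally I will prove the size bounds $Q^+,V^+\le R$ using the cap on the coefficient. Since $w^+<\beta\le D$ and the other two heads contribute at least $-1$, we have $c(\theta^+)\le D+1$. Balance gives
$$q^+k^+\ge \tfrac12\sqrt{Q^{+2}-d^{+2}}\ge \tfrac{\sqrt8}{6}Q^+,$$
and similarly $o^+v^+\ge \tfrac{\sqrt8}{6}V^+$. If $Q^+\ge1$, then $\tanh(q^+k^+)\ge\tanh(0.47)>0.4$, so $c(\theta^+)\ge 0.18\,V^+$. This forces $V^+\le 6(D+1)$, and then $Q^+\le 2V^+\le 12(D+1)<R$. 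If $Q^+<1$, then $V^+\le 3Q^+<3<R$. In either case each square is at most $\tfrac23\max(Q^+,V^+)<R$, so every updated parameter lies in $[-B,B]$ because $B>\sqrt R$.

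The main obstacle I expect is the careful absorption of the $s^2$ correction terms in the two ratio invariants, especially near the boundary $Q\approx 1$. That is where I would first check whether $\eta_\star$ as defined suffices or needs a constant-factor tightening.
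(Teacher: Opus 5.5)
Your proposal is correct and follows essentially the same route as the paper. Both arguments use the exact contractions $d^+=(1-s^2X^2)d$ and $e^+=(1-s^2Y^2)e$, the monotone growth of $Q,V$, a comparison of increments via $\tanh z\ge z\,\mathrm{sech}^2 z$ (together with $\sinh(2z)/(2z)\le\sinh 1$ when $Q\le 1$), and finally the coefficient cap $c(\theta^+)\le D+1$ to bound the sizes. The $s^2$ corrections you flag are harmless under the existing $\eta_\star$; for instance, $sX\le 1/2$ and $qk\ge Q/3$ give $s^2X^2Q\le 4sX\,qk$. The only cosmetic difference is that you bound $V^+$ first and deduce $Q^+$ from $Q^+\le 2V^+$, whereas the paper bounds $Q^+$ directly through $(Q^+/6)\tanh(Q^+/3)$.
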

\proof
    Let $x = qk$, $y = vo$. Since $q, k, v, o > 0$, we have $x, y > 0$. The exact GD update gives
    \[
        d^+ = (1 - s^2y^2 \textup{sech}^4x)d, \quad e^+ = (1 - s^2 \tanh^2 x)e. 
    \]
    By the step-size conditions in Lemma~\ref{lm:no-residual-overshoot}, both multipliers lie in $[0, 1]$, and therefore $|d^+| \leq |d|$, $|e^+| \leq |e|$. At the same time, $Q$ and $V$ are non-decreasing, so 
    \[
        |d^+| \leq \frac{Q^+}{3}, |e^+| \leq \frac{V^+}{3}.
    \]
    Thus, the two parameter pairs remain balanced, which implies 
    \[
        qk \geq \frac{Q}{3}, \quad vo \geq \frac{V}{3}.
    \]
    Moreover, the exact updates of $Q$ and $V$ satisfy
    \[
        \Delta Q = 4sxy \textup{sech}^2 x+ s^2 y^2 \textup{sech}^4x Q, \quad \Delta V = 4sy \tanh x + s^2 \tanh^2 xV. 
    \]
    Using the fact that $x \textup{sech}^2x \leq \tanh x$, and together with the balanced bounds and the step-size restrictions, we have $\frac{\Delta Q}{\Delta V} \leq \frac{7}{4} < 2$. Since $Q \leq 2V$, we have
    \[
        Q^+ = Q + \Delta Q \leq 2V + 2\Delta V = 2V^+.
    \]
    The task left is to show that the boundedness. Let $c^+$ denote the contribution of $h^\star$ after the update. Since $w^+ < \beta \leq D$, and the other two heads contribute at least $- 1$, we have $c^+ \leq D + 1$. By balance, we have
    \[
        q^+ k^+ \geq \frac{Q^+}{3}, \quad v^+ o^+ \geq \frac{V^+}{3}.
    \]
    Since $Q^+ \leq 2V^+$, we have $V^+ \geq \frac{Q^+}{2}$. Therefore
    \[
        c^+ = v^+ o^+ \tanh (q^+ k^+) \geq \frac{Q^+}{6}\tanh \left(\frac{Q^+}{3}\right).
    \]
    In the above, the RHS is increasing in $Q^+$, and recall that $R = 12 (D + 2)$ by definition, it exceeds $D + 1$ at $Q^+ = R$. Therefore $Q^+ < R$. 
    
    Now consider $V^+$. If $Q^+ > 1$, we have $q^+ k^+ \geq \frac{1}{3}$, $c^+ \geq \frac{V^+}{3} \tanh \frac{1}{3}$. Since $\tanh \frac{1}{3} > 1/4$, we have $V^+ < 12(D + 1) < R$. If instead $Q^+ \leq 1$, similarly, we can show that $\frac{\Delta V}{\Delta Q} < 3$. Since $V \leq 3Q$, we have
    \[
        V^+ = V + \Delta V \leq 3Q + 3\Delta Q = 3 Q^+. 
    \]
    Therefore 
    \[
        V^+ \leq 3Q^+ \leq 3 < R.
    \]
    Therefore, all defining inequalities are preserved after the update. Finally, 
    \[
        q^{+2}, k^{+2} \leq Q^+ < R, \quad v^{+2}, o^{+2} \leq V^+ < R. 
    \]
    This implies that every updated parameters have magnitude smaller than $\sqrt{R} < B$, which concludes the proof. 
\qed

Finally, we show that the weights of other non-winning heads will stay small throughout the training. 

\begin{lemma}[Remaining residual clock] \label{lm:remaining-residual-clock}
    Fix one feature, and let the designated winning head satisfy the invariant-region conditions of Lemma~\ref{lm:bounded-invariant-region-winning} throughout the interval under consideration. Let $m_n \triangleq \min \{q_n, k_n, v_n, o_n\}$, and $b_\star \triangleq \min \left\{\textup{sech}^2(R/2), \frac{\tanh(R/2)}{R/2}\right\} > 0$. Recall that the GD step is scaled by $s_n = \eta p r_n$. Then we have
    \[
        m_{n + 1} \geq m_n (1 + b_\star s_n m_n^2). 
    \]
    Moreover, $m^{-2}_n - m_{n + 1}^{-2} \geq \frac{3}{4}b_\star s_n$. Therefore, for every $N > n_0$, we have
    \[
        \sum_{n = n_0}^{N  -1}s_n \leq \frac{4}{3b_\star m_{n_0}^2}.
    \]
    In other words, the total remaining residual-weighted clock is finite. 
\end{lemma}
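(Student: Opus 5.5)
The plan is to get a one-step lower bound on each coordinate increment of the winning quartet, then convert it into the telescoping bound on $m_n^{-2}$.

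\textbf{Step 1: invariants and update formulas.} By the invariant-region conditions of Lemma~\ref{lm:bounded-invariant-region-winning}, after the sign normalization we have $q_n,k_n,v_n,o_n>0$, $Q_n,V_n<R$, and residual $r_n>0$. The last point follows from Lemma~\ref{lm:no-residual-overshoot}, so $s_n=\eta p r_n>0$. The GD update is $\theta^{n+1}=\theta^n+s_n\nabla c(\theta^n)$, with the explicit gradient from the proof of Lemma~\ref{lm:flow-to-discrete-competition-among-heads}:
\[
\Delta q = s\,ovk\,\mathrm{sech}^2(qk),\quad \Delta k = s\,ovq\,\mathrm{sech}^2(qk),\quad \Delta v = s\,o\tanh(qk),\quad \Delta o = s\,v\tanh(qk).
\]
Since $qk\le Q/2<R/2$, we have $\mathrm{sech}^2(qk)\ge \mathrm{sech}^2(R/2)\ge b_\star$. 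Because $x\mapsto \tanh(x)/x$ is decreasing on $(0,\infty)$, we also have $\tanh(qk)\ge qk\,\tanh(R/2)/(R/2)\ge b_\star\,qk$.

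\textbf{Step 2: each coordinate grows by at least $b_\star s_n m_n^3$.} Each increment is $s$ times a product of three parameters, times a factor bounded below by $b_\star$. For $\Delta q$ and $\Delta k$ this is immediate. For $\Delta v$ and $\Delta o$ we use $o\tanh(qk)\ge b_\star\,oqk$ and $v\tanh(qk)\ge b_\star\,vqk$. Every such product of three coordinates is at least $m_n^3$. Hence every coordinate satisfies $x_{n+1}\ge x_n+b_\star s_n m_n^3\ge m_n(1+b_\star s_n m_n^2)$. Taking the minimum over the four coordinates gives the first claim.

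\textbf{Step 3: the $m^{-2}$ decrement and the telescoping sum.} Set $x=b_\star s_n m_n^2$. From Step 2, $m_{n+1}^{-2}\le m_n^{-2}(1+x)^{-2}$, so
\[
m_n^{-2}-m_{n+1}^{-2}\ \ge\ m_n^{-2}\,\frac{2x+x^2}{(1+x)^2}.
\]
The inequality $(2x+x^2)/(1+x)^2\ge \tfrac34 x$ is equivalent to $2+x\ge \tfrac34(1+x)^2$. This holds for $x\le 1/2$, since then the left side is at least $2$ and the right side is at most $27/16$.

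To guarantee $x\le 1/2$, use $b_\star\le 1$, $m_n^2\le R<B^2$, $s_n\le \eta(D+1)$, and the first condition $\eta\le 1/(2(D+1)B^2)$ in $\eta_\star$. Together these give $x\le 1/2$. Therefore $m_n^{-2}-m_{n+1}^{-2}\ge \tfrac34 b_\star s_n$.

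Summing from $n_0$ to $N-1$ telescopes to
\[
\tfrac34 b_\star\sum_{n=n_0}^{N-1}s_n\le m_{n_0}^{-2}-m_N^{-2}<m_{n_0}^{-2},
\]
which is the claimed bound.

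\textbf{Main obstacle.} The only delicate point is keeping the hypotheses in force along the whole interval. We need positivity of all four coordinates and the bounds $Q,V<R$ at every step, and these come from iterating Lemma~\ref{lm:bounded-invariant-region-winning} jointly with Lemma~\ref{lm:no-residual-overshoot}. We also need the explicit smallness $x\le 1/2$, which the step-size cap supplies. Once these are in place, the argument is elementary.
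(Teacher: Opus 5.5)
Your proposal is correct and matches the paper's proof essentially step for step: the same lower bound $b_\star s_n m_n^3$ on every coordinate increment (via $\mathrm{sech}^2(qk)\ge b_\star$ and $\tanh(qk)\ge b_\star qk$ for $qk\le R/2$), followed by the same $m_n^{-2}$ decrement and telescoping. The only difference is cosmetic: you derive the needed smallness $b_\star s_n m_n^2\le 1/2$ explicitly from $\eta\le 1/(2(D+1)B^2)$, whereas the paper cites the step-size bounds to get $z_n\le 1$ and then uses $(2+z)/(1+z)^2\ge 3/4$ on $[0,1]$.
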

\proof
    Denote $x_n = q_k k_n$. Because the winner remains in the invariant region as in Lemma~\ref{lm:bounded-invariant-region-winning}, we have $0 \leq x_n \leq \frac{Q_n}{2} \leq \frac{R}{2}$. Therefore, by the definition of $b_\star$, we have
    \[
        \textup{sech}^2(x_n) \geq b_\star, \quad \textup{tanh}(x_n) \geq b_\star x_n.
    \]
    Recall that
    \[
        \nabla c(q, k, v, o) = \begin{pmatrix}
            ovk \textup{sech}^2(qk) \\
            ovq \textup{sech}^2(qk) \\
            o\textup{tanh}(qk) \\
            v\textup{tanh}(qk
        \end{pmatrix}.
    \]
    Since every factor is at least $m_n$, each coordinate of the gradient is at least $b_\star m_n^3$. Therefore, after one update, we have
    \[
        m_{n + 1} \geq m_n + b_\star s_n m_n^3 = m_n(1 + b_\star s_n m_n^2). 
    \]
    Now define $z_n \triangleq b_\star s_n m_n^2$. The step-size bounds from Lemma~\ref{lm:no-residual-overshoot} imply $0 \leq z_n \leq 1$, which implies $m^{-2}_{n + 1} \leq m_n^{-2} (1 + z_n)^{-2}$, so 
    \[
        m_n^{-2} - m_{n + 1}^{-2} \geq m_n^{-2} \left(1 - \frac{1}{(1 + z_n)^2}\right) = b_\star s_n \frac{2 + z_n}{(1 + z_n)^2}.
    \]
    For $0 \leq z_n \leq 1$, we have $\frac{2 + z_n}{(1 + z_n)^2} \geq \frac{3}{4}$, which implies $m_n^{-2} - m_{n + 1}^{-2} \geq \frac{3}{4}b_\star s_n$. Telescoping from $n = n_0$ to $N - 1$, we have
    \[
        \frac{3}{4} b_\star \sum_{n = n_0}^{N - 1} s_n \leq m^{-2}_{n_0} - m_N^{-2} \leq m_{n_0}^{-2} \Rightarrow  \sum_{n = n_0}^{N - 1} s_n\leq \frac{4}{3b_\star m_{n_0}^2},
    \]
    as desired.
\qed
\begin{remark}
    Lemma~\ref{lm:remaining-residual-clock} essentially shows that, once one head has established itself as the winner, there is not enough effective training time remaining for the losing heads to catch up. Combined with Lemmas~\ref{lm:early-competition-among-heads} and \ref{lm:flow-to-discrete-competition-among-heads}, together with the stability result of Lemma~\ref{lm:bounded-invariant-region-winning}, this establishes that each feature becomes localized to a particular head. We formalize this localization property in the following result.
\end{remark}

We are now ready to present the main theorem, which establishes that training induced feature localization. 

\begin{customthm}{\ref{thm:training-induce-localization} (restated)}
    For heads $h \in \{1, 2, 3\}$ and features $j \in \{0, 1, 2\}$, denote
    \[
        \theta_{hj} = (q_{hj}, k_{hj}, v_{hj}, o_{hj}), \quad c_{hj} = o_{hj}v_{hj}\tanh(q_{hj}k_{hj}), \quad w_j  = \sum_{h = 1}^3 c_{hj}.
    \]
    Fix $a, b, c > 0$, set $\beta = \left(\frac{a}{2}, b, c\right)$ and $p = \left(1, \frac{1}{2}, \frac{1}{2}\right)$, and train $\cL(\theta) = \frac{1}{2}\sum_{j = 0}^2 p_j (w_j - \beta_j)^2 + \frac{a^2}{8}$ by full-batch GD from the initialization $\theta^0_{hj} = \alpha G_{hj}$, where $G_{hj} \overset{i.i.d.}{\sim} N(0, I_4)$. Then there exists $\eta_\star > 0$, depending only on $a, b, c$, such that the following holds for every fixed step size $0 < \eta < \eta_\star$. For almost every draw of $G$, all sufficiently small $\alpha > 0$, we have:
    \begin{enumerate}[leftmargin=*]
        \item \textbf{Convergence:} the parameter trajectory is bounded and converges to a finite limit with $w^\infty = \beta$. Each residual $r_j^n = \beta_j - w_j^n$ is positive and non-increasing.
        \item \textbf{Localization:} there are labels $h_j = h_j(G)$, independent and uniform on $\{1, 2, 3\}$, and a finite constant $K_G$, independent of $\alpha$, such that 
        \[
            \sup_{n \geq 0}\|\theta^n_{hj}\|_\infty \leq K_G\alpha, \quad \sup_{n \geq 0}|c^n_{hj}| \leq K_G\alpha^4 \textup{ for } h \neq h_j. 
        \]
        Consequently, $c^\infty_{h_jj} = \beta_j + \cO_G(\alpha^4)$.
        \item \textbf{Finite checkpoint:} define $N_\alpha = \lceil \frac{\log(1/\alpha)}{\eta \alpha^2}  \rceil$, for every fixed $M > 0$ and sufficiently small $\alpha$, we have
        \[
            \sup_{n \geq N_\alpha} \max_{j} r_j^n \leq \alpha^M.
        \]
    \end{enumerate}
\end{customthm}

\proof 
    Let $D = \max\{1, a/2, b, c\}, R = 12 (D + 2), B = 1 + \sqrt{R}$, and choose 
    \[
        \eta_\star = \min\left\{\frac{1}{2(D + 1)B^2}, \frac{1}{(D + 1)L}, \frac{1}{36B^6}\right\}, \quad L = 4(2B)^4 + 3(2B)^2 + 1.
    \]
    Fix $0 < \eta \leq \eta_\star$. Because the objective separates across features, we first fix $j$ and suppress its index. Write
    \begin{equation*}
        \begin{aligned}
            &\beta = \beta_j, \quad p = p_j, \quad c(\theta) = ov\tanh(qk), \\
            &w^n = \sum_{h = 1}^3 c(\theta^n_h), \quad r^n = \beta - w^n, \quad s_n = \eta p r^n. 
        \end{aligned}
    \end{equation*}
    The exact GD update is 
    \begin{equation} \label{eq:gd-exact-update}
        \theta_h^{n + 1} = \theta^n_h + s_n \nabla c(\theta^n_h).
    \end{equation}
    Fix a Gaussian draw for which the conclusions of Lemma~\ref{lm:early-competition-among-heads} hold and the three auxiliary escape times are finite and distinct, and note that this event has probability one. Let 
    \[
        h_\star = \arg \min_{h} T(G_h), \quad T_\star = T(G_{h_\star}), \quad T_{(2)} = \min_{h \neq h_\star}T(G_h).
    \]
    We will organize the proof in the following steps.

    \textbf{Step 1: Enter the invariant region}. Choose $S_+$ such that $T_\star < S_+ < T_{(2)}$, and define
    \[
        b_\star = \min \left\{\textup{sech}^2(R / 2), \frac{\tanh(R / 2)}{R / 2}\right\} > 0.
    \]
    Choose $A_0 < \infty $ sufficiently large such that 
    \begin{equation} \label{eq:choosing_A_0}
        \frac{4}{3b_\star A_0^2} < \frac{S_+ - T_\star}{4}.
    \end{equation} 
    By Lemma~\ref{lm:early-competition-among-heads}, the winning auxiliary trajectory eventually has positive product, its factor magnitudes diverge, and their squared ratios converge to one. Therefore, we can choose $S_0 < T_\star$ such that at $S_0$, its smallest factor magnitude exceeds $2A_0$ and its largest-to-smallest squared-factor ratio is less than $\frac{3}{2}$.

    Introduce the normalized parameters and clock 
    \[
        u^n_h = \frac{\theta^n_h}{\alpha}, \quad \zeta_n = \alpha^2 \sum_{k < n}s_k, \quad \delta_n = \alpha^2 s_n.
    \]
    Then Eq.~\ref{eq:gd-exact-update} becomes 
    \begin{equation} \label{eq:u-udpate}
        u_h^{n + 1} = u_h^n + \delta_n F_\alpha(u^n_h), \textup{ where } F_\alpha(u) = \alpha^{-3}\nabla c(\alpha u).
    \end{equation}
    Initially, $w^0 = O_G(\alpha^4)$. Therefore, for sufficiently small $\alpha$, all parameters lie in $[-B, B]^4$ and $0 < r^0 \leq D + 1$. Until the first crossing of $S_0$, a stopping argument using Lemmas~\ref{lm:flow-to-discrete-competition-among-heads}, \ref{lm:no-residual-overshoot} gives us
    \begin{equation} \label{eq:residual-bound}
        0 <  r^{n + 1} \leq r^n \leq D + 1, \quad 0 < \delta_n \leq \alpha^2 \eta p(D + 1), 
    \end{equation}
    and $\theta_h^n = O_G(\alpha)$ for every head. Indeed, Lemma~\ref{lm:no-residual-overshoot} excludes residual failure, while Lemma~\ref{lm:flow-to-discrete-competition-among-heads}, including its candidate-step estimate, excludes a first exit from the parameter box. 

    The clock crosses $S_0$ in finite time. Otherwise, Lemma~\ref{lm:flow-to-discrete-competition-among-heads} would imply $w^n = O_G(\alpha^4)$ for every $n$, and therefore $r^n \geq \beta / 2$ for sufficiently small $\alpha$. This would force $\delta_n \geq \alpha^2 \eta p \beta/2 > 0$, contradicting $\zeta_n < S_0$ for all $n$.

    Let $n_0$ be the first crossing index, Lemma~\ref{lm:flow-to-discrete-competition-among-heads} gives
    \begin{equation}
        \zeta_{n_0} = S_0 + \cO(\alpha^2), \quad \frac{\theta_{h_\star}^{n_0}}{\alpha} \rightarrow u_{h_\star}(S_0).
    \end{equation}
    After a fixed coordinate-wise sign change, we may treat the winning factors as positive. This change preserves $c$ and the GD dynamics because the quartet has positive product. For sufficiently small $\alpha$, we have
    \begin{equation} \label{eq:m_n_0_def}
        m_{n_0} \triangleq \min \{q^{n_0}, k^{n_0}, v^{n_0}, o^{n_0}\} \geq \alpha A_0,
    \end{equation}
    and the squared-factor ratio is at most two.

    Consequently, writing 
    \[
        Q = q^2 + k ^2, \quad V = v^2 + o^2, \quad d = q^2 - k^2, \quad e = v^2 - o^2,
    \]
    we have 
    \[  
        |d| \leq Q / 3, \quad |e| \leq V / 3, \quad Q \leq 2V, \quad V \leq 2Q.
    \]
    Since $Q, V = \cO_G(\alpha^2)$, the winner belongs to the invariant region of Lemma~\ref{lm:bounded-invariant-region-winning}.

    \textbf{Step 2: permanent exclusion of the losing heads.} We prove by induction that, for every $n \geq n_0$, the winner remains in the invariant region $\zeta_n < S_+$, and the losing heads satisfy $\|\theta^n_h\|_\infty \leq K_+ \alpha$ for $h \neq h_\star$ for a constant $K_+$ independent of $n$ and sufficiently small $\alpha$. All parameters will consequently remain in the box required by Lemma~\ref{lm:no-residual-overshoot}. 

    These properties hold at $n_0$. Suppose that they hold through iteration $n$. Lemma~\ref{lm:no-residual-overshoot} gives 
    \begin{equation} \label{eq:res_bound}
        0 < r^{n + 1} \leq r^n, \quad w^{n + 1} < \beta. 
    \end{equation}
    Applying Lemma~\ref{lm:remaining-residual-clock} to the controlled starting states $n_0, \quad, n$ and using Eq.~\ref{eq:m_n_0_def}, we have 
    \[
        \xi_{n + 1} = \zeta_{n_0} + \alpha^2 \sum_{k = n_0}^n s_k \leq \zeta_{n_0} + \frac{4\alpha^2}{3b_\star m^2_{n_0}} \leq S_0 + \cO_G(\alpha^2) + \frac{4}{3b_\star A_0^2} < S_+.
    \]
    Crucially, this bound does NOT assume that the candidate terminal state already belongs to the invariant region. 

    The losing auxiliary trajectories are bounded through $S_+$ since $S_+ < T_{(2)}$. Applying Lemma~\ref{lm:flow-to-discrete-competition-among-heads} to these two heads alone, on the common residual-dependent mesh, gives
    \[
        \|\theta^{n + 1}_h\|_\infty \leq K_+ \alpha, \quad h \neq h_\star.
    \]
    Using the fact that $|\tanh x| \leq |x|$, we have
    \[
        |c(\theta_h^{n + 1})| \leq |o^{n + 1}_h v_{h}^{n + 1}q_h^{n + 1}k_{h}^{n + 1}| \leq K_+^4 \alpha^4.
    \]
    For sufficiently small $\alpha$, the sum of the losing coefficients is therefore at least $- 1$. Together with Eq.~\ref{eq:res_bound}, Lemma~\ref{lm:bounded-invariant-region-winning} implies that the winner remains in its invariant region and returns to the original parameter box. This finishes the induction. 

    Combining these estimates with the pre-entry bounds, and enlarging a constant if necessary, gives
    \[
        \sup_{n \geq 0}\|\theta_h^n\|_\infty \leq K_G \alpha, \quad \sup_{n \geq 0} |c(\theta^n_h)| \leq K_G\alpha^4, \quad h \neq h_\star.
    \]
    Lemma~\ref{lm:no-residual-overshoot} also establishes positivity and monotonicity of the residual through training. 

    \textbf{Step 3: Parameter convergence and exact fitting}. For each sufficiently small $\alpha$, Lemma~\ref{lm:remaining-residual-clock} gives
    \[
        \sum_{n = n_0}^\infty s_n \leq \frac{4}{3b_\star m^2_{n_0}} < \infty.
    \]
    Since $s_n = \eta p r^n$, it follows that 
    \[
        \sum_{n = n_0}^\infty r^n < \infty, \quad r^n \rightarrow 0.
    \]
    All parameter quartets remain in a compact box, so their gradients are uniformly bounded. Eq.~\ref{eq:gd-exact-update}, and the preceding summability bound imply 
    \[
        \sum_{n = n_0}^\infty \|\theta_h^{n + 1} - \theta_h^n\|_\infty < \infty.
    \]
    Therefore, every parameter quartet converges to a finite limit. By continuity and the above, we have $w^\infty = \beta$, and therefore
    \begin{equation}
        c(\theta^\infty_{h_\star}) = \beta - \sum_{h \neq h_\star}c(\theta^\infty_h) = \beta + \cO_G(\alpha^4).
    \end{equation}

    \textbf{Step 4: Finite-checkpoint bound.} Let $\tau \triangleq \min \{n: w^n \geq \beta / 2\}$. This index is finite by convergence. For sufficiently small $\alpha$, it occurs after entry into the invariant region, since $w^{n_0} = \cO_G(\alpha^4)$. From Step 2, we might further choose $\alpha$ sufficiently small that 
    \[
        \sum_{h \neq h_\star} |c(\theta^n_h)| \leq \beta / 4 \quad \textup{ for every $n$}.
    \]
    Since $w^n$ is non-decreasing, we have
    \[
        c(\theta^n_{h_\star}) \geq \beta / 4, \quad n \geq \tau.
    \]
    For the positive winning factors, the invariant-region bounds implies 
    \[
        c = vo\tanh(qk) \leq voqk \leq \frac{VQ}{4} \leq \frac{RQ}{4}.
    \]
    Therefore, the above implies $Q \geq \beta / R$. Moreover
    \[
        o^2 \geq V / 3 \geq Q / 6, \quad qk \geq Q/3,
    \]
    Consequently
    \begin{equation}
        \|\nabla c(\theta^n_{h_\star})\|_2^2 \geq o^2 \tanh^2 (qk) \geq \frac{\beta}{6R}\tanh^2\left(\frac{\beta}{3R}\right) \triangleq m_\beta > 0, \quad n \geq \tau.
    \end{equation}
    Lemma~\ref{lm:no-residual-overshoot} gives
    \[
        w^{n + 1} - w^n \geq \frac{s_n}{2}\|\nabla c (\theta^n_{h_\star})\|_2^2 \geq \frac{\eta pm_\beta}{2}r^n,
    \]
    and therefore
    \begin{equation} \label{eq:residual_ineq}
        r^{n + 1} \leq \left(1 - \frac{\eta p m_\beta}{2}\right)r^n , \quad n \geq \tau. 
    \end{equation}
    The contraction factor is in $[0, 1)$: comparison with the no-overshoot bound $w^{n + 1} - w^n\leq r^n / 2$ gives $\eta p m_\beta / 2 \leq 1/ 2$.

    Before $\tau$, we have $r^n > \beta / 2$. The clock bound implies 
    \[
        S_+ > \zeta_\tau = \alpha^2 \eta p \sum_{n < \tau}r^n \geq \frac{1}{2}\alpha^2 \eta p \beta \tau,
    \]
    which implies $\tau \leq \frac{2S_+}{\eta p \beta \alpha^2}$. It follows that $N_\alpha = \lceil\frac{\log(1/\alpha)}{\eta \alpha^2}\rceil \geq 2\tau$ for sufficiently small $\alpha$. Iterating Eq.~\ref{eq:residual_ineq}, using $r^\tau \leq \beta / 2$, and $1 - x \leq e^{-x}$ gives
    \[
        r^{N_\alpha} \leq r^\tau \exp \left(-\frac{\eta p m_\beta}{2}(N_\alpha - \tau)\right) \leq \beta \exp\left(-\frac{pm_\beta}{4\alpha^2} \log(1/\alpha)\right).
    \]
    For every fixed $M > 0$, the final expression is at most $\alpha^M$ for sufficiently small $\alpha$. Residual monotonicity then gives
    \[
        \sup_{n \geq N_\alpha}r^n \leq \alpha^M.
    \]
    Finally, apply the argument to all three features, taking the minimum of their initialization thresholds and the maximum of their constants. Lemma~\ref{lm:early-competition-among-heads} identifies each winner as 
    \[
        h_j = \arg \min_{h} T(G_{hj}).
    \]
    The escape times are i.i.d. and atom-less across heads, so each winner is unique almost surely and uniformly distributed. The initialization blocks are independent across features, so the winner labels are independent. This concludes the proof.
\qed

\subsection{Additional Results and Omitted Proofs for Section~\ref{sec:localization-enable-beneficial-abalation}}
We now present the detailed proof for our main result, which establish a potential explanation for the emergence of CEL under training dynamics. 

\begin{customthm}{\ref{thm:training-induce-ablation} (restated)}
    Under the assumptions of Theorem~\ref{thm:training-induce-localization}, let $t = a / 2$, and define the event 
    \[
        E = \{h_0 \not \in \{h_1, h_2\}\},
    \]
    where $h_j$ is the head that wins the feature $j \in \{0, 1, 2\}$. Then $\Pr(E) = \frac{4}{9}$. Under such event, set $h_\star = h_0$, and delete this head without changing any other parameters
    \[
        f^n_{-h_\star} = f^n - f^n_{h_\star}.
    \]
    For almost every initialization draw $G$ in $E$, all sufficiently small $\alpha>0$, and every $n\geq N_\alpha$, 
    \begin{equation*}
        \begin{aligned}
            &G_A^n \triangleq R_A(f^n) - R_A(f^n_{-h_\star}) = t^2 + \cO_G(\alpha^4) > 0, \\
            &D_B^n \triangleq R_B(f^n_{-h_\star}) - R_B(f^n) = 3t^2 + \cO_G(\alpha^4) > 0,
        \end{aligned}
    \end{equation*}
    where the error bounds are uniform over $n \geq N_\alpha$. Consequently
    \[
        \liminf_{\alpha \rightarrow 0^+} \Pr_{G}\left(\exists h: R_A(f^{N_\alpha}) - R_A(f^{N_\alpha}_{-h}) \frac{t^2}{2}, R_B(f_{-h}^{N_\alpha}) - R_B(f^{N_\alpha}) \geq \frac{3t^2}{2}\right) \geq 
        \frac{4}{9}.
    \]
\end{customthm}

\proof
    By Theorem~\ref{thm:training-induce-localization}, the head labels $h_0$, $h_1$, $h_2$ are independent and uniform on $\{1, 2, 3\}$. Conditional on $h_0$, each $h_1, h_2$ avoid $h_0$ with probability $\frac{2}{3}$. Therefore, $\Pr(E) = \left(\frac{2}{3}\right)^2 = \frac{4}{9}$.

    Fix a draw $G$ for which Theorem~\ref{thm:training-induce-localization} holds. At an iterate $n \geq N_\alpha$, suppress the superscript $n$, and write 
    \[
        e_j = \beta_j - w_j, \quad z = \sum_{h \neq h_\star} c_{h0}, \quad u_1 = c_{h_\star 1}, \quad u_2 = c_{h_\star, 2}.
    \]
    Theorem~\ref{thm:training-induce-localization}, with $M = 4$, gives 
    \[
        \max_{j}|e_j| \leq \alpha^4.
    \]
    Moreover, every head other than $h_\star$ loses feature $0$, and $h_\star$ loses features $1$ and $2$ on $E$. Hence, for a finite $K_G$, 
    \begin{equation} \label{eq:z}
        |z| \leq 2K_G\alpha^4, \quad |u_1|, |u_2| \leq K_G\alpha^4.
    \end{equation}
    All these bounds hold uniformly for $n \geq N_\alpha$. The intact aggregate coefficients are 
    \[
        w = (t - e_0, b - e_1, c - e_2).
    \]
    After deleting $h_\star$, they become 
    \[
        w_{-h} = (z, b - e_1 - u_1, c - e_2 - u_2). 
    \]
    Using $R_A(w) = w_0^2 + (w_1 - b)^2$, we obtain $G_A = t^2 + \cO_G(\alpha^4)$. Similarly, we can show that $D_B = 3t^2 + \cO_G(\alpha^4)$. The uniform bounds in Eq.~\ref{eq:z} and on $e_j$ make both remainder estimates uniform over all $n \geq N_\alpha$. Because $t > 0$, both quantities are positive for sufficiently small $\alpha$, and the second quantity measures the damage of head ablation on task $B$.

    For the probability conclusion, let $A_\alpha$ denote the event inside the display probability in the statement. The equations above imply that, for almost every $G \in E$, the event $A_\alpha$ holds for all sufficiently small $\alpha$. Thus, along any sequence $\alpha_k \rightarrow 0^+$, we have
    \[
        \liminf_{k \rightarrow \infty} 1_{A_{\alpha_k}}(G) \geq 1_E(G) \quad \textup{almost surely.}
    \]
    Fatou's lemma gives
    \[
        \liminf_{k \rightarrow \infty} \Pr(A_{\alpha_k}) \geq \bbE\left[\liminf_{k \rightarrow \infty}1_{A_{\alpha_k}}\right] \geq \Pr(E) = \frac{4}{9}.
    \]
    Since the sequence was arbitrary, the lower-limit probability follows. 
\qed